\documentclass{article} 
\usepackage{iclr2026_conference,times}


\usepackage{amsmath,amsfonts,bm}









\def\eqref#1{equation~\ref{#1}}









\def\1{\bm{1}}










\DeclareMathAlphabet{\mathsfit}{\encodingdefault}{\sfdefault}{m}{sl}
\SetMathAlphabet{\mathsfit}{bold}{\encodingdefault}{\sfdefault}{bx}{n}













\definecolor{scholarblue}{rgb}{0.21,0.49,0.74}

\usepackage{hyperref}
\usepackage{url}

\title{Continuous Autoregressive Language Models}

\author{Chenze Shao$^{1}$,\ Darren Li$^{1,2}$,\ Fandong Meng$^{1}$\thanks{Corresponding author.}\ ,\ Jie Zhou$^{1}$ \\
$^1$WeChat AI, Tencent Inc \quad$^2$Qiuzhen College, Tsinghua University \\
}

\usepackage{enumitem}
\usepackage{tikz}
\usepackage{xcolor}
\usetikzlibrary{positioning, shapes.geometric, arrows.meta, calc, fit, shadows.blur, decorations.pathreplacing}
\usepackage{algorithm}
\usepackage{amsthm,amsmath,amssymb}

\usepackage{bbm}
\usepackage{thmtools}
\usepackage{booktabs}
\usepackage{multirow}


\usepackage[noend]{algpseudocode}
\algnewcommand\algorithmicinput{\textbf{Input:}}
\algnewcommand\Input{\item[\algorithmicinput]}
\algnewcommand\algorithmicoutput{\textbf{Output:}}
\algnewcommand\Output{\item[\algorithmicoutput]}
\algnewcommand\algorithmicconst{\textbf{Constants:}}
\algnewcommand\Constants{\item[\algorithmicconst]}
\iclrfinalcopy 
\begin{document}

\maketitle
\begin{abstract}
The efficiency of large language models (LLMs) is fundamentally limited by their sequential, token-by-token generation process. We argue that overcoming this bottleneck requires a new design axis for LLM scaling: increasing the semantic bandwidth of each generative step. To this end, we introduce Continuous Autoregressive Language Models (CALM), a paradigm shift from discrete next-token prediction to continuous next-vector prediction. CALM uses a high-fidelity autoencoder to compress a chunk of K tokens into a single continuous vector, from which the original tokens can be reconstructed with over 99.9\% accuracy. This allows us to model language as a sequence of continuous vectors instead of discrete tokens, which reduces the number of generative steps by a factor of K. The paradigm shift necessitates a new modeling toolkit; therefore, we develop a comprehensive likelihood-free framework that enables robust training, evaluation, and controllable sampling in the continuous domain. Experiments show that CALM significantly improves the performance-compute trade-off, achieving the performance of strong discrete baselines at a significantly lower computational cost. More importantly, these findings establish next-vector prediction as a powerful and scalable pathway towards ultra-efficient language models.
\begin{center}
\textbf{Code:} {\textcolor{scholarblue}{https://github.com/shaochenze/calm}}

\textbf{Project:} {\textcolor{scholarblue}{https://shaochenze.github.io/blog/2025/CALM}}
\end{center}

\end{abstract}
\section{Introduction}
Large Language Models (LLMs) have revolutionized the field of artificial intelligence, demonstrating unprecedented capabilities in understanding, generating, and reasoning with human language \citep{achiam2023gpt,geminiteam2025geminifamilyhighlycapable,deepseekai2025deepseekr1incentivizingreasoningcapability}. However, this remarkable success is shadowed by a critical challenge: their immense computational demands. The training and inference of state-of-the-art LLMs demand massive computational resources, leading to prohibitive expenses and significant environmental concerns \citep{strubell-etal-2019-energy,10.1145/3442188.3445922}. At the heart of this inefficiency lies the foundational paradigm of these models: an autoregressive generation process that operates on a sequence of discrete tokens. Because the computational cost scales with the length of the sequence, generating long-form text or processing extensive contexts remains a fundamental bottleneck, limiting the scalability and accessibility of these powerful models.

The now-ubiquitous use of discrete tokens in LLMs is the result of a pivotal evolution from earlier modeling paradigms. Initially, models that operated at the character level struggled with the computational burden of extremely long sequences \citep{ICML2011Sutskever_524,10.5555/3016100.3016285}. The subsequent shift to modern subword tokenization \citep{sennrich-etal-2016-neural} was driven by a crucial insight: increasing the information density of each text unit reduces sequence length and dramatically boosts model efficiency. This historical success suggests a clear path for unlocking the next order of magnitude in efficiency: continue to increase the semantic bandwidth of each predictive unit.

We argue, however, that this path has reached a fundamental limit, constrained by the very nature of discrete representation. With typical vocabularies in modern LLMs ranging from approximately 32,000 to 256,000 entries, each token carries a surprisingly small amount of information—merely 15 to 18 bits (e.g., $\log_2(32768) = 15$). To increase this capacity—for instance, to represent a whole phrase—the vocabulary size would need to grow exponentially, making the final softmax computation over this vocabulary an untenable bottleneck. This reveals a critical limitation: the information density of discrete tokens is not scalable. Consequently, a profound mismatch has emerged: while model capacity has scaled to unprecedented levels, the task itself—predicting low-information discrete units one at a time—has not evolved. We are now deploying models of immense representational power on a task that fundamentally limits their throughput, forcing them to laboriously predict simple, low-information tokens one by one.

In this work, we confront this limitation directly by introducing a paradigm shift from discrete tokens to a continuous-domain representation. Central to our approach is an autoencoder trained to compress a chunk of K tokens into a single, dense continuous vector and, crucially, reconstruct the original tokens from this vector with high fidelity. Unlike the discrete paradigm, where increasing information density requires an exponential growth in vocabulary size, our continuous representation offers a scalable path forward: the vector's information capacity can be gracefully expanded by simply increasing its dimensionality to accommodate a larger K. This design directly reduces the number of autoregressive steps by a factor of K. Ultimately, it allows us to reframe language modeling from a task of next-token prediction on discrete token sequences to next-vector prediction on continuous vector sequences, as conceptually illustrated in Figure~\ref{fig:concept}.

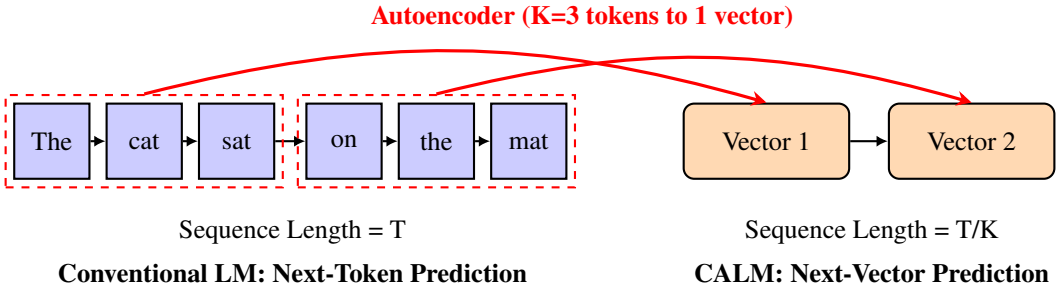
\begin{figure}[t]
\centering
\begin{tikzpicture}[
    token/.style={rectangle, draw, thick, fill=blue!20, minimum size=1cm},
    vector/.style={rectangle, draw, thick, rounded corners, fill=orange!30, minimum width=2.2cm, minimum height=1cm, align=center},
    arrow/.style={-{Latex[length=1.5mm, width=1.5mm]}, thick},
    title/.style={font=\bfseries}
]

\begin{scope}
    \node[title] (left_title) at (3.2, -1.75) {Conventional LM: Next-Token Prediction};

    \node[token] (t1) at (0,0) {The};
    \node[token] (t2) [right=0.2cm of t1] {cat};
    \node[token] (t3) [right=0.2cm of t2] {sat};
    \node[token] (t4) [right=0.4cm of t3] {on};
    \node[token] (t5) [right=0.2cm of t4] {the};
    \node[token] (t6) [right=0.2cm of t5] {mat};

    \foreach \i [count=\j from 2] in {1,...,5}{
        \draw[arrow] (t\i.east) -- (t\j.west);
    }

    \node at (3.2, -1.2) {Sequence Length = T};
\end{scope}

\begin{scope}[xshift=9.5cm]
    \node[title] (right_title) at (1.4, -1.75) {CALM: Next-Vector Prediction};

    \node[vector] (v1) at (0,0) {Vector 1};
    \node[vector] (v2) [right=0.5cm of v1] {Vector 2};

    \draw[arrow] (v1.east) -- (v2.west);

    \node at (1.4, -1.2) {Sequence Length = T/K};
\end{scope}

\node[draw, dashed, red, thick, inner sep=0.1cm, fit=(t1) (t3)] (box1) {};
\node[draw, dashed, red, thick, inner sep=0.1cm, fit=(t4) (t6)] (box2) {};

\draw[-{Stealth[length=2mm, width=2mm]}, red, very thick] (box1.north) to[bend left=15] (v1.north);
\draw[-{Stealth[length=2mm, width=2mm]}, red, very thick] (box2.north) to[bend left=15] (v2.north);
\node[red, align=center, font=\bfseries] at ($(left_title)!0.5!(right_title)-(0,-3.4)$) {Autoencoder (K=3 tokens to 1 vector)};
\end{tikzpicture}
\caption{Comparison between conventional token-by-token generation and our proposed vector-by-vector framework (CALM). By compressing K tokens into a single vector, we reduce the sequence length K-fold, fundamentally improving computational efficiency.}
\label{fig:concept}
\end{figure}

However, shifting to the continuous domain introduces a significant challenge: without a finite vocabulary, a model cannot compute an explicit probability distribution over all possible outcomes using a standard softmax layer. To address this, we develop a comprehensive, likelihood-free framework for our Continuous Autoregressive Language Models (CALM). Our primary contributions, which structure the remainder of this paper, are as follows:
\begin{itemize}[leftmargin=1.1cm]
\item \textbf{A Powerful and Lightweight Autoencoder (Section \ref{sec:autoencoder}):} We first introduce an efficient autoencoder architecture designed to produce robust vector representations. We demonstrate that this model can be both compact and powerful, ensuring high-fidelity reconstruction of the original tokens, which is a prerequisite for the downstream language modeling task.
\item \textbf{Likelihood-Free Language Modeling (Section \ref{sec:likelihood_free_language_modeling}):} To perform generative modeling in the continuous vector space, we employ a lightweight generative head that conditions on the last hidden state to generate the output vector. While the generative head can be any continuous generative model, options like Diffusion \citep{NEURIPS2020_4c5bcfec,li2024autoregressive} or Flow Matching \citep{lipman2023flow} rely on an iterative sampling process, re-introducing a significant inference bottleneck. Our framework therefore specifically adopts the Energy Transformer \citep{shao2025continuous}, a recent architecture designed for efficient, single-step generation of continuous vectors, while empirically demonstrating superior generation quality.
\item \textbf{Likelihood-Free LM Evaluation (Section \ref{sec:likelihood_free_lm_evaluation}):} The absence of explicit likelihoods makes traditional metrics like Perplexity inapplicable. We address this by proposing BrierLM, a novel metric for language modeling based on the Brier score \citep{brier1950verification}. We show that BrierLM is strictly proper, theoretically ensuring a fair comparison of model capabilities. Crucially, BrierLM can be estimated unbiasedly by only drawing samples from the model, making it perfectly suited for CALM where likelihoods are intractable.
\item \textbf{Likelihood-Free Temperature Sampling (Section \ref{sec:likelihood_free_temperature_sampling}):} Controlled generation via temperature sampling is an indispensable feature of modern LLMs, yet it relies on the explicit manipulation of a probability distribution. We introduce a principled, likelihood-free sampling algorithm that can, in theory, draw samples from the exact temperature distribution, and we accompany it with a highly efficient batch approximation. 
\end{itemize}

We empirically validate our CALM framework on standard language modeling benchmarks, which demonstrates a superior performance-compute trade-off. For instance, a CALM grouping K=4 tokens delivers performance comparable to strong discrete baselines, but at a significantly lower computational cost. This findings highlight a new design axis for language models: rather than solely scaling parameters and data for performance, one can now scale the information capacity of each step as a powerful new lever for computational efficiency.

\section{Autoencoder}
\label{sec:autoencoder}
\subsection{High-Fidelity Reconstruction}
The foundational component of our CALM framework is an autoencoder tasked with learning a bijective mapping between a chunk of $K$ discrete tokens and a continuous vector. Formally, we seek an encoder $f_{enc}: \mathcal{V}^K \to \mathbb{R}^l$ and a decoder $g_{dec}: \mathbb{R}^l \to \mathcal{V}^K$, where $\mathcal{V}$ is the vocabulary, such that for a given token sequence $\mathbf{x}_{1:K} = (x_1, \dots, x_K)$, the reconstruction $g_{dec}(f_{enc}(\mathbf{x}_{1:K}))$ closely approximates $\mathbf{x}_{1:K}$. For simplicity and computational efficiency, we design our autoencoder to be \textit{context-free}, meaning it processes each token chunk independently of its surrounding sequence. A context-aware autoencoder that also conditions on previous vector representations is a natural and promising next step, which we leave for future exploration.

The encoder begins by mapping the input sequence $\mathbf{x}_{1:K}$ to $K$ embeddings. Each embedding is independently processed by a position-wise feed-forward network (FFN). The resulting $K$ hidden states are then flattened and compressed by a linear layer$:\mathbb{R}^{Kd} \to  \mathbb{R}^{d}$. This unified representation is passed through a second FFN and a linear projection to produce the $l$-dimensional latent vector $\mathbf{z}$.

The decoder architecture mirrors the encoder. It first transforms $\mathbf{z}$ using a linear layer and an FFN to obtain a $d$-dimensional hidden state, which is then expanded by another linear layer to dimension $Kd$ and reshaped into a sequence of $K$ hidden states. Each of these states is passed through a second FFN, followed by a projection to vocabulary logits using the tied input embedding matrix. Finally, the tokens are reconstructed by applying an argmax operation over these logits.

The autoencoder is trained to minimize the reconstruction error by optimizing the standard cross-entropy loss across all $K$ token positions:
\begin{equation}
\label{eq:recon_loss}
\mathcal{L}_{\text{ae}}(\mathbf{x}_{1:K}) = - \sum_{i=1}^{K} \log p_{dec}(x_i | \mathbf{z}=f_{\text{enc}}(\mathbf{x}_{1:K})).
\end{equation}

We empirically validate this architecture and find it to be both highly effective and efficient. For instance, when grouping $K=4$ tokens, a latent vector of just $l=10$ dimensions is sufficient to achieve high-fidelity reconstruction, with a token-level accuracy of over 99.9\%. Moreover, the autoencoder is exceptionally lightweight; with a shallow architecture and a modest hidden dimension of $d=512$, its computational overhead is nearly negligible compared to that of language model.

\subsection{Robust Vector Representation}

While the autoencoder described above achieves near-perfect reconstruction, we found that it is practically impossible to effectively train a continuous language model based on the vector space it produces. The root cause of this challenge is that an autoencoder optimized solely for reconstruction learns an exceptionally brittle representation. Lacking any incentive to form a smooth latent manifold, the encoder learns to pack information with maximum efficiency, creating a highly irregular mapping. In such a space, a minor perturbation to a latent vector $\mathbf{z}$—such as the small, inevitable errors made by a generative model can cause the decoder to reconstruct a completely unrelated token sequence. Therefore, for our CALM framework to be viable, the autoencoder must satisfy another critical objective: its vector representation should be robust.

\vspace{2pt}
\noindent\textbf{Variational Regularization.} To build a robust latent space, our primary strategy is to smooth the latent manifold by moving from a deterministic autoencoder to a variational one \citep{Kingma2014}, aligning our approach with prominent generative models \citep{Rombach_2022_CVPR,pmlr-v202-liu23f} that operate within a smooth and structured latent space. Instead of mapping an input chunk directly to a vector $\mathbf{z}$, the encoder now outputs the parameters of a diagonal Gaussian distribution, $\bm{\mu}$ and $\bm{\sigma}$, from which the latent vector is sampled: $\mathbf{z} \sim \mathcal{N}(\bm{\mu}, \bm{\sigma}^2\mathbf{I})$. This change is accompanied by a new objective term, a KL divergence loss that penalizes the deviation of the encoded distribution from a standard normal prior, $\mathcal{N}(0, \mathbf{I})$. The total loss function is thus a weighted sum of the reconstruction and regularization terms:
\begin{equation}
\mathcal{L}_{\text{total}} = \mathcal{L}_{\text{ae}} + \beta \cdot \mathcal{L}_{\text{KL}},
\end{equation}
where $\beta$ is a hyperparameter balancing the two objectives (we set $\beta=0.001$), and $\mathcal{L}_{\text{KL}}$ is the KL divergence, defined as:
\begin{equation}
\label{eq:kl_loss}
\mathcal{L}_{\text{KL}}(p_E(\mathbf{z}|\mathbf{x}_{1:K}) \| \mathcal{N}(0,\mathbf{I})) = -\frac{1}{2}\sum_{i=1}^{l}(1+\log \sigma_{i}^2 - \sigma_{i}^2-\mu_{i}^2).
\end{equation}
This variational objective discourages the encoder from relying on arbitrarily precise or large-magnitude values in $\mathbf{z}$, thereby promoting a smoother and more regularized latent manifold that is more amenable to generative modeling.

\vspace{2pt}
\textbf{Preventing Posterior Collapse.} A significant challenge in training VAEs is posterior collapse. This issue manifested in our model as a tendency for some latent dimensions to fully collapse to the standard normal prior. While collapsing a dimension drives its KL divergence to zero, it renders that dimension uninformative for reconstruction. More critically, these pure noise dimensions introduce a chaotic signal that interferes with the training of the downstream language model, destabilizing the learning process. To mitigate this, we adopt the KL clipping strategy from \citet{NIPS2016_ddeebdee}, which modifies the objective by clipping each dimension's KL loss at a constant floor:
\begin{equation}
\label{eq:kl_clip}
\mathcal{L}_{\text{KL}}^{clip} = \sum_{i=1}^{l} \max(\lambda_{KL}, \mathcal{L}_{\text{KL}, i}),
\end{equation}
where $\mathcal{L}_{\text{KL}, i}$ is the KL divergence for the $i$-th dimension and $\lambda_{KL}$ is the threshold (we use $\lambda_{KL}=0.5$). This technique ensures that every dimension is encouraged to actively participate in reconstruction, thus preventing collapse and fostering a dense, structured representation.

\vspace{2pt}
\textbf{Dropout for Enhanced Robustness.} Beyond structuring the latent space with variational methods, we further enhance its robustness by injecting noise during training using two complementary forms of dropout. First, we apply dropout with a rate of $p=0.15$ to the latent vector $\mathbf{z}$ before it is passed to the decoder. This forces the autoencoder to learn a redundant representation, making it robust to minor prediction errors from the downstream generative model. Second, we apply dropout to input tokens by randomly masking a fraction ($p=0.15$) of tokens. Analogous to the Continuous Bag-of-Words (CBOW) method \citep{mikolov2013efficientestimationwordrepresentations}, this compels the autoencoder to infer masked tokens from their context, thereby enriching the latent vector with the chunk's semantic context rather than just performing a simple token-index compression. 
Crucially, these dropout techniques are employed exclusively during the autoencoder's training phase to build a robust latent representation; they are disabled during the subsequent training and inference of the continuous language model.

The synthesis of these techniques produces a powerful and robust autoencoder. For a chunk of $K=4$ tokens, we now employ a latent vector of $l=128$ dimensions, providing the necessary capacity to encode information redundantly. The encoder learns a posterior distribution where the standard deviations, $\sigma_i$, converge to approximately 0.3. This means that sampling the latent vector $\mathbf{z}$ effectively perturbs the predicted mean $\bm{\mu}$ with a substantial Gaussian noise $\bm{\sigma} \approx 0.3 \mathbf{I}$. Despite this significant latent perturbation, the decoder still maintains a token-level accuracy exceeding 99.9\%. This vector representation, which combines high fidelity with high robustness, lays a solid foundation for the subsequent learning of Continuous Autoregressive Language Models (CALM).

\section{Likelihood-Free Language Modeling}
\label{sec:likelihood_free_language_modeling}
\subsection{Next-Vector Prediction}
The autoencoder developed in Section \ref{sec:autoencoder} establishes a robust and high-fidelity mapping between a chunk of $K$ discrete tokens and a single continuous vector, which allow us to reframe language modeling from a task of next-token prediction on discrete token sequences to next-vector prediction on continuous vector sequences. Specifically, a sequence of $T$ tokens, $\mathbf{X} = (x_1, \dots, x_T)$, is first grouped into $L=T/K$ non-overlapping chunks. The encoder, $f_{\text{enc}}$, then transforms the original sequence into a new, more compact sequence of continuous vectors:
\begin{equation}
\mathbf{Z} = (\mathbf{z}_1, \mathbf{z}_2, \dots, \mathbf{z}_{L}), \quad \text{where} \quad \mathbf{z}_i = f_{\text{enc}}(x_{(i-1)K+1}, \dots, x_{iK}).
\end{equation}
Consequently, the autoregressive objective evolves to predicting the next vector in the sequence:
\begin{equation}
\label{eq:continuous_lm_obj}
p(\mathbf{Z})=\prod_{i=1}^{L}p(\mathbf{z}_i | \mathbf{z}_{<i}).
\end{equation}

While this autoregressive structure is preserved, the underlying mechanism for predicting the next element must be redesigned. Unlike standard language models, which rely on a softmax layer to compute a probability distribution over a finite vocabulary, our model must predict a vector within the infinite space $\mathbb{R}^l$. The softmax function is not applicable over this uncountable set, rendering the explicit probability density $p(\mathbf{z}_i | \mathbf{z}_{<i})$ intractable. This introduces two critical challenges:
\begin{itemize}[leftmargin=1.1cm]
    \item \textbf{Training:} The likelihood $p(\mathbf{z}_i | \mathbf{z}_{<i})$ becomes intractable, precluding the use of maximum likelihood estimation (i.e., minimizing cross-entropy loss) for training.
    \item \textbf{Evaluation:} Standard evaluation metrics like Perplexity, which are derived directly from the model's likelihood, can no longer be computed to measure model performance.
\end{itemize}

We address both of these challenges in turn. For the training problem, we introduce our approach to likelihood-free language modeling in the remainder of this section. For the evaluation problem, we propose a likelihood-free evaluation methodology in Section \ref{sec:likelihood_free_lm_evaluation}.

\subsection{Generative Head}
Generative modeling of continuous data \citep{Kingma2014,goodfellow2014generative,NEURIPS2020_4c5bcfec} is a well-established field, foundational to domains such as image and audio synthesis where data is inherently continuous. A promising recent paradigm \citep{tschannen2023givt,li2024autoregressive,shao2025continuous} combines these approaches with autoregressive models: a Transformer backbone predicts a conditioning hidden state, which is used by a subsequent generative model to produce the continuous output for each step. Our Continuous Autoregressive Language Models (CALM) adapts this paradigm, but with a critical focus on computational efficiency that constrains the design of this generative component. We therefore conceptualize this component as a lightweight \emph{generative head}. Formally, the generative head is a stochastic function that takes the Transformer's hidden state, $\mathbf{h}_{i-1} \in \mathbb{R}^d$, and draws a sample $\mathbf{z}_i \in \mathbb{R}^l$ from the conditional distribution:
\begin{equation}
\mathbf{h}_{i-1}=\text{Transformer}(\mathbf{z}_{1:i-1}),\quad \mathbf{z}_i \sim p(\cdot | \mathbf{h}_{i-1}).
\end{equation}

While the generative head can be any continuous generative model, prominent options like Diffusion \citep{NEURIPS2020_4c5bcfec,li2024autoregressive,fan2025fluid} or Flow Matching \citep{lipman2023flow,ren2025flowar,ren2025nexttokennextxpredictionautoregressive} are misaligned with our goal of efficiency. These models rely on an iterative sampling process—requiring dozens or even hundreds of network evaluations to produce a single vector—which directly counteracts the speedup gained from reducing the number of autoregressive steps. The CALM architecture therefore demands a generative head capable of high-quality, single-step generation, a challenge we address next with an energy-based objective.

\subsection{Energy Transformer}
\subsubsection{Strictly Proper Scoring Rules}
To meet the demand for a generative head capable of high-quality, single-step generation, we draw inspiration from \citet{shao2024language,shao2025continuous}, which frames the generative task as the optimization of strictly proper scoring rules \citep{gneiting2007strictly}. Formally, a scoring rule $S(P, y)$ assigns a numerical score to a predictive distribution $P$ upon observing an outcome $y$, where higher scores are better. The quality of a forecast $P$ against the true data-generating distribution $Q$ is measured by its expected score, defined as $S(P,Q) = \mathbb{E}_{y \sim Q}[S(P, y)]$. A scoring rule is considered \emph{proper} if the expected score is maximized when the predictive distribution $P$ matches the data distribution $Q$:
\begin{equation}
S(P, Q) \le S(Q, Q) \quad \text{for all distributions } P.
\end{equation}
This property ensures that the scoring rule does not incentivize the model to predict a biased or distorted distribution. Furthermore, a scoring rule is \emph{strictly proper} if equality holds only when $P=Q$, meaning that the optimal score can only be achieved by reporting the true distribution.

The use of a strictly proper scoring rule as a training objective is therefore a powerful and principled approach for training our generative head, as maximizing the expected score is equivalent to driving the model's predictive distribution to match the true distribution. This principle offers a direct generalization of maximum likelihood estimation, where the negative log-likelihood is a special case corresponding to the logarithmic score \citep{good1952rational}. While the likelihood is intractable in the continuous domain, the theory of scoring rules provides a rich family of alternatives.

\subsubsection{Energy Loss}

We build our training objective using the Energy Score \citep{energy}, a strictly proper scoring rule that has proven effective across a range of generative applications \citep{NEURIPS2020_9873eaad,vahidi2024probabilistic,JMLR:v25:23-0038,shao2025continuous,ma2025efficientspeechlanguagemodeling}. The energy score is entirely likelihood-free; rather than evaluating probability densities, it measures the alignment between the prediction and the observation via sample distances. For a predictive distribution $P$ and a ground truth observation $\mathbf{y}$, the energy score is defined as:
\begin{equation}
\label{eq:energy_score}
S(P, \mathbf{y}) = \mathbb{E}_{\mathbf{x}', \mathbf{x}'' \sim P}[\|\mathbf{x}' - \mathbf{x}''\|^{\alpha}] - 2 \ \mathbb{E}_{\mathbf{x} \sim P}[\|\mathbf{x} - \mathbf{y}\|^{\alpha}],
\end{equation}
where $\mathbf{x}$, $\mathbf{x}'$ and $\mathbf{x}''$ are independent samples drawn from $P$. The score is strictly proper for any $\alpha \in (0,2)$. Typically, $\alpha$ is set to 1. The first term encourages diversity, penalizing the model for producing collapsed or overly confident predictions where all samples are identical. The second term encourages fidelity, driving the model's predictions to be close to the ground truth observation.

While the expectations in Equation \ref{eq:energy_score} make the energy score intractable to compute exactly, we can construct an unbiased Monte Carlo estimator to serve as a practical loss function, which we term the \emph{energy loss}. To do this, we draw $N$ candidate samples, $\{\tilde{\mathbf{z}}_{i,1}, \dots, \tilde{\mathbf{z}}_{i,N}\}$, from the generative head at each step $i$. Furthermore, we leverage a unique property of our setup: our autoencoder does not map a token chunk to a fixed point, but rather to a conditional Gaussian posterior $\mathbf{z}_i \sim q(\cdot|\mathbf{x}_{(i-1)K+1:iK})$. Relying on a single sample $\mathbf{z}_i$ as ground-truth can introduce high variance into the energy loss. To mitigate this and stabilize training, we draw $M$ target samples, $\{\mathbf{z}_{i,1}, \dots, \mathbf{z}_{i,M}\}$, from this posterior. Combining these sample sets, the final energy loss is formulated as:
\begin{equation}
\label{eq:energy_loss}
\mathcal{L}_{\text{energy}}=\sum_{i=1}^{L}(\frac{2}{NM}\sum_{n=1}^{N}\sum_{m=1}^{M} \|\mathbf{z}_{i,m} - \tilde{\mathbf{z}}_{i,n}\| - \frac{1}{N(N-1)}\sum_{n \neq k} \|\tilde{\mathbf{z}}_{i,n} - \tilde{\mathbf{z}}_{i,k}\|).
\end{equation}

In practice, we set $N\!=\!8$ and $M\!=\!100$. The number of model samples $N$ directly scales the training cost, as each sample requires an evaluation of the generative head; we therefore use a small $N$ to maintain high training efficiency. The overhead of drawing target vectors from a known Gaussian posterior is almost negligible, which allows us to use a large $M$ to reduce the variance of loss.

A key advantage of this likelihood-free training objective is its flexibility: it only requires the ability to draw samples from the generative head, placing minimal constraints on its internal architecture and allowing for the simple and efficient designs we explore next.

\subsubsection{Model Architecture}

\begin{figure}[h!]
\centering
\begin{tikzpicture}[font=\small,
    node distance=0.7cm and 1.2cm,
    discreteBlock/.style={rectangle, draw=blue!50!black, fill=blue!10, text width=2.4cm, minimum height=0.97cm, align=center, rounded corners=2pt, thick},
    continuousBlock/.style={rectangle, draw=teal!60!black, fill=teal!15, text width=2.4cm, minimum height=0.97cm, align=center, rounded corners=2pt, thick},
    coreBlock/.style={rectangle, draw=black!60, fill=black!8, text width=2.4cm, minimum height=0.97cm, align=center, rounded corners=2pt, thick},
    noiseBlock/.style={rectangle, draw=orange!60!black, fill=orange!20, align=center, rounded corners=2pt, thick},
    token/.style={rectangle, draw=blue!50!black, fill=blue!20, minimum size=0.6cm, thick},
    tokenContainer/.style={draw=blue!50!black, dashed, inner sep=0.2cm, rounded corners=2pt},
    arrow/.style={-Latex, thick, color=black!80},
    zoomBorder/.style={draw=black!70, rounded corners, inner sep=0.3cm, thick},
    internalBlock/.style={rectangle, draw=black!60, fill=black!10, minimum height=0.7cm, text width=2.2cm, align=center, rounded corners=1pt},
    op/.style={circle, draw, thick, minimum size=0.4cm, inner sep=0pt}
]

\node[token] (in1) {};
\node[token, right=0.1cm of in1] (in2) {};
\node[token, right=0.1cm of in2] (in3) {};
\node[token, right=0.1cm of in3] (inK) {};
\node[tokenContainer, fit=(in1) (inK)] (input_tokens_box) {};
\node[below=0.1cm of input_tokens_box] {Input Tokens};

\node[discreteBlock, above= of input_tokens_box] (embedding) {Token Embeddings};
\draw[arrow] (input_tokens_box.north) -- (embedding);

\node[coreBlock, above= of embedding] (compression) {Input Compression MLP};
\draw[arrow] (embedding) -- (compression);

\node[coreBlock, above= of compression] (transformer) {Transformer Backbone};
\draw[arrow] (compression) -- (transformer);

\node[continuousBlock, right=1.6cm of transformer] (gen_head) {Energy-Based Generative Head};
\draw[arrow] (transformer.east) -- (gen_head.west) node[midway, above] {$\mathbf{h}$};

\node[noiseBlock, above=0.5cm of gen_head] (noise) {Random Noise $\boldsymbol{\varepsilon}_0$};
\draw[arrow] (noise) -- (gen_head);

\node[continuousBlock, below= of gen_head] (final_linear) {Linear};
\draw[arrow] (gen_head) -- (final_linear) node[midway, right] {$\boldsymbol{\varepsilon}_L$};

\node[discreteBlock, below=of final_linear] (decoder) {AE Decoder};
\draw[arrow] (final_linear) -- (decoder) node[midway, right] {$\mathbf{z}$};

\node[token, below=0.9cm of decoder, xshift=-1.08cm] (out1) {};
\node[token, right=0.1cm of out1] (out2) {};
\node[token, right=0.1cm of out2] (out3) {};
\node[token, right=0.1cm of out3] (outK) {};
\node[tokenContainer, fit=(out1) (outK)] (output_tokens_box) {};
\node[below=0.1cm of output_tokens_box] {Output Tokens};
\draw[arrow] (decoder.south) -- (output_tokens_box.north);

\node (zoom_anchor) [right=4cm of gen_head, yshift=-4.3cm] {};

\node (epsilon_in) [below=0.8cm of zoom_anchor, xshift=-0.6cm] {$\boldsymbol{\varepsilon}_l$};
\node (h_in_zoom) [below=0.8cm of zoom_anchor, xshift=0.6cm] {$\mathbf{h}$};

\node[internalBlock] (fusion1) [above=0cm of zoom_anchor] {Linear};
\node[internalBlock] (linear1) [above=0.5cm of fusion1] {Linear};
\node[internalBlock] (swiglu1) [above=0.5cm of linear1] {SwiGLU};
\node[op] (add1) [above=0.5cm of swiglu1] {$+$};

\draw[arrow] (fusion1) -- (linear1);
\draw[arrow] (linear1) -- (swiglu1);
\draw[arrow] (swiglu1) -- (add1);

\draw[arrow] (epsilon_in.north) -- (fusion1.south -| epsilon_in.north);
\draw[arrow] (h_in_zoom.north) -- (fusion1.south -| h_in_zoom.north);

\draw[arrow, rounded corners] let \p1=(epsilon_in), \p2=(fusion1) in (\x1, 0.45*\y1 + 0.55*\y2) 
    -- ++(-1,0) coordinate (residual_corner) |- (add1);

\node (block_output) [above=0.8cm of add1] {$\boldsymbol{\varepsilon}_{l+1}$};
\draw[arrow] (add1) -- (block_output);

\node[zoomBorder, fit=(fusion1) (add1) (residual_corner)] (zoom_container) {};
\node[left=1.25cm of $(fusion1.west)!0.5!(add1.west)$] {$\times L$};

\draw[dashed, thick, color=black!70] (gen_head.north east) -- (zoom_container.north west);
\draw[dashed, thick, color=black!70] (gen_head.south east) -- (zoom_container.south west);

\end{tikzpicture} 
\caption{The Architecture of the Continuous Autoregressive Language Model (CALM). Left: The main autoregressive loop where discrete tokens are compressed to condition a Transformer, whose output hidden state $\mathbf{h}$ guides an energy-based head to predict a continuous vector $\mathbf{z}$. The AE decoder then maps \(\mathbf{z}\) back to discrete tokens for the next step. Right: A detailed view of the generative head, showing how it refines a noise vector $\boldsymbol{\varepsilon}_{0}$ through a series of residual MLP blocks.}
\label{fig:calm_architecture}
\end{figure}
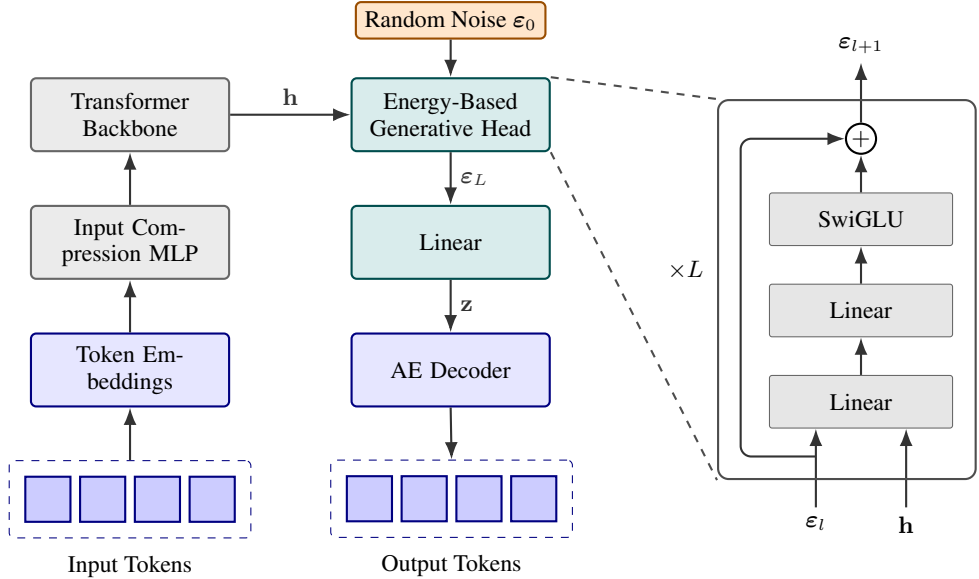

We now detail our model architecture. We use a standard Transformer backbone, with modifications focused on the output-side generative head and the input-side adaptation.

\textbf{Energy-Based Generative Head}. The inputs to the generative head are twofold: the hidden state $\mathbf{h}_{i-1}$ from the Transformer backbone, which provides the conditional context, and a random noise vector $\bm{\varepsilon} \in \mathbb{R}^{d_{\text{noise}}}$, which provides the necessary stochasticity for sampling. Each dimension of $\bm{\varepsilon}$ is drawn independently from a uniform distribution $\mathcal{U}[-0.5, 0.5]$. Both the hidden state $\mathbf{h}_{i-1}$ and the noise vector $\bm{\varepsilon}$ are projected by independent linear layers to match the head's internal dimension, which we set to match the Transformer's hidden dimension $d$.

The core of the generative head is a stack of $L$ residual MLP blocks that progressively refine the initial noise representation $\bm{\varepsilon}_0=\bm{\varepsilon}$ into the final output vector. As illustrated in Figure \ref{fig:calm_architecture}, each MLP block first fuses the current representation $\bm{\varepsilon}_l$ with the hidden state via two linear layers. This is followed by a SwiGLU layer \citep{shazeer2020gluvariantsimprovetransformer} with an intermediate dimension of $d$. A residual connection then adds the block's input to its output. This process concludes with a final linear layer that projects the representation to the target dimension $l$, producing the output vector $\mathbf{z}_{i}$.

A single MLP block contains approximately $6d^2$ parameters. We set the number of blocks to a quarter of the number of Transformer layers; the entire generative head therefore accounts for only about $10\%$ of the total model parameters, making its computational overhead minimal.

\textbf{Discrete Token Input}. An intuitive approach for the model's input would be to embed the predicted latent vectors $\mathbf{z}_{i-1}$ from the previous step into the Transformer's hidden dimension $d$ using a linear projection. However, we empirically found that using these latent vectors as input for the Transformer leads to a noticeable degradation in performance, as the model struggles to unpack the semantic information from such a compact input representation. 

To circumvent this, we ground the model's autoregressive process in the discrete token space. During training, the input for each step is formed by the $K$ tokens from the previous step. To maintain efficiency, we use a lightweight input compression module—a two-layer MLP—to map the K embeddings into a single input representation. The inference process unfolds as follows:
\begin{enumerate}[leftmargin=1.1cm]
    \item \textbf{Input Processing:} At step $i$, the previously generated chunk of $K$ tokens are embedded and compressed into a single input representation and fed into the Transformer.
    \item \textbf{Continuous Prediction:} The Transformer outputs the hidden state $\mathbf{h}_{i-1}$, which our energy-based generative head then uses to predict the next continuous vector, $\mathbf{z}_i$.
    \item \textbf{Discrete Feedback Loop:} The predicted vector $\mathbf{z}_i$ is immediately passed through the frozen decoder of our pre-trained autoencoder, $g_{\text{dec}}$, to reconstruct the next $K$ discrete tokens.
\end{enumerate}

The complete architecture of CALM is illustrated in Figure \ref{fig:calm_architecture}.

\section{Likelihood-Free LM Evaluation}
\label{sec:likelihood_free_lm_evaluation}
\subsection{Principles of LM Evaluation}
The CALM framework operates as an implicit generative model, whose predictive probability distribution is defined by its sampling process. Consequently, standard LM evaluation metrics like Perplexity, which are defined in terms of explicit likelihoods, can no longer be employed to measure model performance. Furthermore, the energy loss used for training is itself unsuitable for evaluation, as its magnitude is subjective to the specific latent space shaped by the autoencoder. This necessitates the development of a model-agnostic evaluation metric, one that can faithfully assess language modeling capabilities in a principled, yet entirely likelihood-free, manner.

The goal of a evaluation metric is to quantify the divergence between the model's predictive distribution, $P$, and the true data distribution, $Q$. This principle is formalized by the property that the metric is uniquely optimized when the model accurately recovers the data distribution ($P=Q$). This ensures the evaluation is fair and cannot be hacked by a model that systematically distorts its predictions. For instance, the conventional metric of Perplexity serves as a prime example of this principle. It is grounded in the expected negative log-likelihood, which can be decomposed into the sum of the KL divergence and data entropy:
\begin{equation}
\mathbb{E}_{y \sim Q}[-\log P(y)] = \mathbb{E}_{y \sim Q}\left[\log \frac{Q(y)}{P(y)}\right] + \mathbb{E}_{y \sim Q}[-\log Q(y)] = \underbrace{D_{KL}(Q\|P)}_{\text{Minimized at } P=Q} + \underbrace{H(Q)}_{\text{Constant}}.
\end{equation}
This property establishes Perplexity as a theoretically sound measure of a model's capability to capture the true distribution, which is uniquely minimized when $P=Q$.

In contrast, a naive metric like the raw likelihood of the observed outcome, $P(y)$, fails this principle. The expected score under this metric, $\mathbb{E}_{y \sim Q}[P(y)]$, is maximized by a deterministic prediction that assigns a probability of 1 to the single most frequent outcome, i.e., $P(\arg\max_y Q(y))=1$. Such a metric would therefore incorrectly favor an overconfident model that fails to capture the underlying data uncertainty. This highlights a critical distinction: a principled metric must balance rewarding accuracy with correctly representing the predictive uncertainty. The naive likelihood $P(y)$ only addresses the former, making it an inadequate measure of a model's predictive quality.
\subsection{BrierLM: Brier for Language Modeling}
For a principled and likelihood-free evaluation, we turn to the Brier score \citep{brier1950verification}, a classic strictly proper scoring rule now widely used to assess the calibration of modern neural networks \citep{lakshminarayanan2017simple,NEURIPS2019_8558cb40,NEURIPS2022_3915a87d}. For a predictive distribution $P$ and a ground-truth outcome $y$, the Brier score is defined as:
\begin{equation}
\text{Brier}(P, y) = 2P(y) - \sum_{x} P(x)^2.
\end{equation}
Unlike the raw likelihood $P(y)$, which solely measures accuracy, the Brier score incorporates an additional term, $\sum_{x} P(x)^2$, to quantify predictive uncertainty. This structure balances two competing objectives, which ultimately rewards a well-calibrated prediction. This property is revealed by the following decomposition of the expected Brier score:
\begin{equation}
\mathbb{E}_{y \sim Q}[\text{Brier}(P, y)] =-\underbrace{\sum_{x} (P(x)-Q(x))^2}_{\text{Squared Error (minimized at } P=Q\text{)}} +\underbrace{\sum_{x}Q(x)^2}_{\text{Data Variance (constant)}}.
\end{equation}

While the Brier score is theoretically sound, its direct computation remains intractable for CALM, as it requires knowledge of the full predictive distribution $P$. We find, however, that an unbiased Monte Carlo estimator for the Brier score can be constructed in an entirely likelihood-free manner, using only samples drawn from the model. Specifically, the uncertainty term, $\sum_{x} P(x)^2$, can be interpreted as the collision probability of two independent samples. Therefore, its unbiased estimator is simply the indicator function $\mathbb{I}{\{x_1=x_2\}}$, where $x_1, x_2 \sim P$. Similarly, the accuracy term $P(y)$ can be estimated by $\mathbb{I}{\{x={y}\}}$ using a single sample $x \sim P$. Combining these, we construct a practical, unbiased estimator for the Brier score using two samples drawn from the model:
\begin{equation}
\text{Brier}(P, y) \approx \mathbb{I}\{x_1=y\} + \mathbb{I}\{x_2=y\} - \mathbb{I}\{x_1=x_2\}, \quad {x}_1, {x}_2 \sim P.
\end{equation}

This estimator enables a likelihood-free evaluation of CALM's predictive capabilities. A straightforward approach is to assess next-token prediction performance in a teacher-forcing setting. This would involve generating two latent vectors at each step, decoding them using the frozen autoencoder's decoder, and computing the Brier score using only the first token of each resulting chunk. However, such an evaluation is insufficient as it ignores the generation quality of the remaining $K-1$ tokens. To address this limitation, we further introduce \emph{Brier-$n$}, a metric that computes the Brier score over entire n-grams. In this formulation, the indicator functions of the estimator treat the n-gram as a single, atomic outcome. Finally, following the convention of established n-gram-based metrics like BLEU \citep{papineni2002bleu}, we define our composite metric, \emph{BrierLM} (Brier for Language Modeling), as the geometric mean of Brier-$n$ scores for $n=1$ to $4$, which we then scale by 100 to place it on a more interpretable 0-100 range:
\begin{equation}
\label{eq:brierl_alt}
\text{BrierLM} = 100 \cdot \left( \prod_{n=1}^{4} \text{Brier-}n \right)^{0.25}.
\end{equation}

The utility of BrierLM extends beyond CALM, serving as a universal evaluation protocol that is also applicable to conventional autoregressive models. For such models, the BrierLM estimator can be applied by simply drawing samples from the final softmax distribution, enabling direct and fair comparisons with our likelihood-free framework. To validate this, we evaluated both cross-entropy and BrierLM throughout the training of our baseline autoregressive models (detailed in Section \ref{sec:settings}). Figure \ref{fig:brier} visualizes the joint distribution of the two metrics. Intriguingly, we find that BrierLM is highly consistent with cross-entropy loss, exhibiting a nearly linear relationship with a Pearson correlation coefficient of -0.966 and a Spearman's rank correlation of -0.991. This strong monotonic alignment confirms that BrierLM is a reliable measure of language modeling capability, establishing it as a trustworthy likelihood-free alternative to Perplexity.

\begin{figure}[h]
  \begin{center}
    \includegraphics[width=.75\columnwidth]{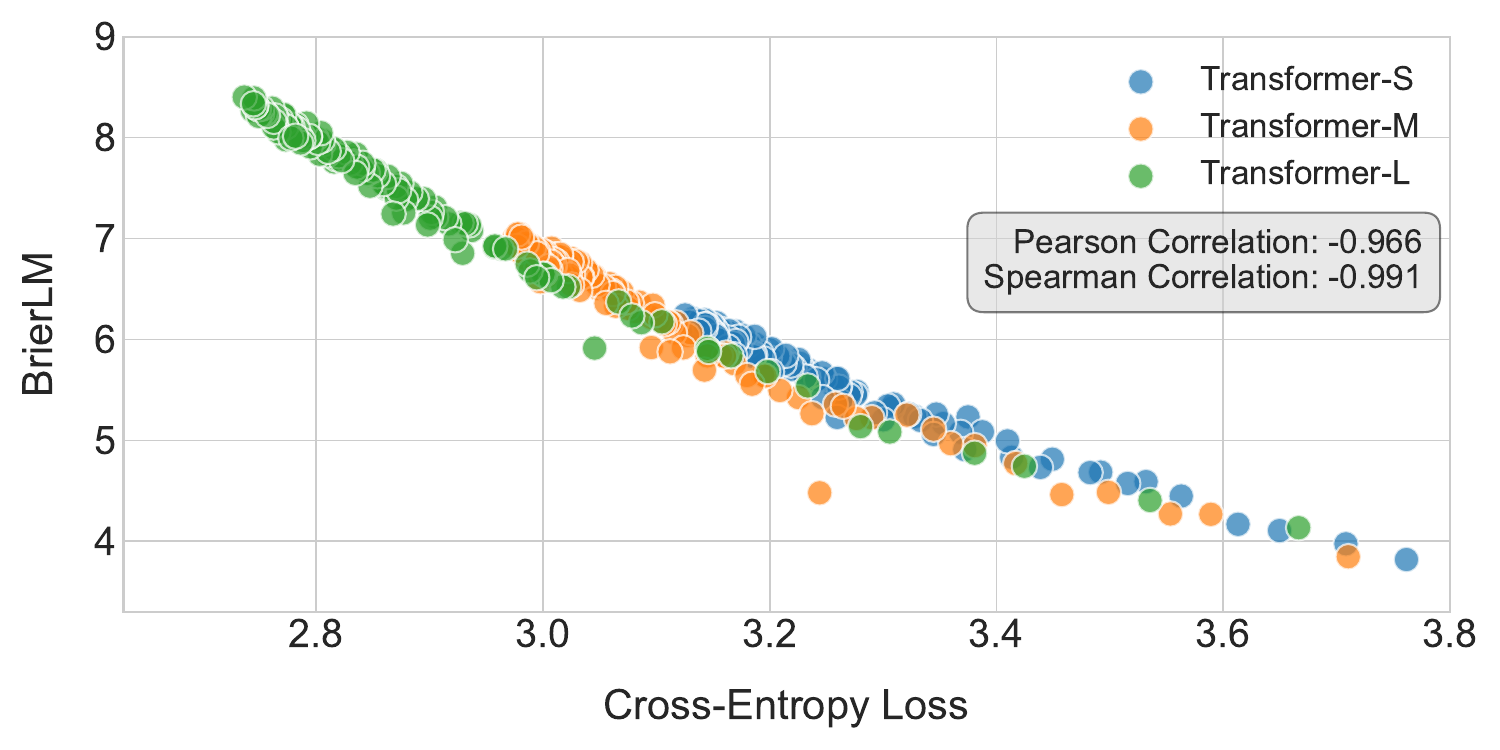}
    \vspace{-0.5em}
    \caption{Joint distribution of the cross-entropy loss and the BrierLM score across different models and training checkpoints.}
    \label{fig:brier}
  \end{center}
\end{figure}

Furthermore, BrierLM offers a particularly significant advantage for the growing class of implicit generative models, such as diffusion-based language models \citep{NEURIPS2021_958c5305,han-etal-2023-ssd,10.5555/3692070.3693403,arriola2025block}. These models have historically been challenging to evaluate, often relying on the complex and sometimes loose estimation of variational lower bounds (ELBOs) to approximate Perplexity. BrierLM circumvents this entire challenge, offering a direct, unbiased method to faithfully assess their language modeling capabilities and enabling fair comparisons across different model families.

\section{Likelihood-Free Temperature Sampling}
\label{sec:likelihood_free_temperature_sampling}
\subsection{Exact Temperature Sampling via Rejection}
\begin{algorithm}[t]
\caption{Likelihood-free Temperature Sampling}
\label{alg:temp_sampling_generalized}
\begin{algorithmic}[1]
\Input
     A base sampler $S$ for an implicit discrete distribution $P(x)$; A target temperature $T \in (0,1)$
\Output
     Sample $x$ accepted with probability $P_T(x) \propto P(x)^{1/T}$

\Procedure{SampleAtTemperature}{$S, T$}
    \State $n \leftarrow \lfloor 1/T \rfloor$. \Comment{Integer part of $1/T$}
    \State $\alpha \leftarrow 1/T - n$. \Comment{Fractional part, $0 \le \alpha < 1$}

    \State \textbf{Stage 1: Integer Part $(n)$}
    \State Draw $n$ i.i.d. samples $x_1,\ldots,x_n \sim S$
    \If{$x_1 = \cdots = x_n$}
        \State $x^* \gets x_1$ \Comment{Find candidate $x^*$}
    \Else
        \State \textbf{restart from stage 1} \Comment{Rejection}
    \EndIf
    \If{$\alpha = 0$}
        \State \Return $x^*$ \Comment{Accept $x^*$ as Stage 2 is not needed}
    \EndIf

    \State \textbf{Stage 2: Fractional Part $(\alpha)$}

    \State $i \leftarrow 1$
    \Loop
        \State Draw $x \sim S$
        \If{$x = x^*$}
            \State \Return $x^*$ \Comment{Accept candidate}
        \Else
            \State Draw $u \sim \mathcal{U}(0,1)$  \Comment{Uniform distribution}
            \If{$u < \alpha/i$}
                \State \textbf{restart from stage 1} \Comment{Rejection}
            \Else
                \State $i \leftarrow i + 1$ \Comment{Continue to next iteration}
            \EndIf
        \EndIf
    \EndLoop
\EndProcedure
\end{algorithmic}
\end{algorithm}
Controlled generation via temperature sampling is an indispensable feature of modern LLMs. Conventionally, this technique is implemented by rescaling pre-softmax logits, a mechanism that requires explicit access to the model's probability distribution. However, this approach is incompatible with our CALM framework, whose generative head is likelihood-free and provides only a sampler. This presents a critical challenge: performing temperature sampling with only a black-box sampler. In this section, we address this challenge by developing an exact algorithm, grounded in the principles of rejection sampling, that provably achieves this goal.

The intuition for our algorithm stems from the relationship between repeated sampling and probability exponentiation. In the context of CALM, a sample $x$ corresponds to a complete chunk of $K$ tokens produced at each step. Consider the simple case where the temperature $T=1/n$ for an integer $n$, which makes the target distribution $P_T(x) \propto P(x)^n$. The probability of drawing the exact same sample $x$ in $n$ independent trials from the sampler is also $P(x)^n$. This motivates an elegant rejection sampling scheme: we draw $n$ samples and accept them if and only if all $n$ samples are identical. Otherwise, we reject the entire set and restart the process. The distribution of accepted samples is thus provably proportional to $P(x)^n$, providing a foundation for our general algorithm.

To generalize this approach for any arbitrary temperature $T \in (0, 1)$, we decompose the exponent $1/T$ into its integer part, $n = \lfloor 1/T \rfloor$, and fractional part, $\alpha = 1/T - n$. This decomposition structures our algorithm as a two-stage rejection sampling process. The first stage handles the integer component $n$ using the repetition-based scheme described above, producing a candidate sample $x$ only if $n$ independent draws are identical. The second stage, which handles the fractional exponent $\alpha$, requires a more subtle approach. Here, we draw upon the theory of Bernoulli Factory \citep{10.1145/175007.175019,MENDO20194366} to construct an iterative procedure that simulates a biased coin flip with a success probability of $P(x)^\alpha$. A sample is accepted only if it passes both stages; failure at any point triggers a restart of the entire process. The complete procedure is formally detailed in Algorithm \ref{alg:temp_sampling_generalized}. The following theorem guarantees its correctness.

\vspace{5pt}
\begin{restatable}{theorem}{tempsampling}
\label{thm:temp_sampling}
For an implicit discrete distribution $P(x)$ with sampler $S$ and a temperature $T \in (0,1)$, Algorithm \ref{alg:temp_sampling_generalized} generates samples distributed as:
$$
P_T(x) = \frac{P(x)^{1/T}}{Z_T}, \quad Z_T = \sum_{x} P(x)^{1/T}.
$$
\end{restatable}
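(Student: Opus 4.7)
The plan is to decompose Algorithm~\ref{alg:temp_sampling_generalized} into its two rejection-sampling stages, compute the probability that each stage accepts a prescribed candidate $x^\ast$ in a single attempt, and then combine them via the outer restart loop to identify the output distribution as the normalized $P(x)^{1/T}$.

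\textbf{Stage 1.} For any fixed $x^\ast$, Stage 1 produces $x^\ast$ as its candidate in a single attempt exactly when all $n$ i.i.d.\ draws from $S$ coincide with $x^\ast$, which by independence has probability $P(x^\ast)^n$. When $\alpha = 0$ the algorithm halts here and returns, so the output is already proportional to $P(x)^n = P(x)^{1/T}$, which covers the boundary case.

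\textbf{Stage 2.} For $\alpha \in (0,1)$ and $p := P(x^\ast) > 0$, I analyze the inner loop as a Bernoulli factory realizing the map $p \mapsto p^{\alpha}$. At iteration $i$ the loop accepts (probability $p$), triggers a global restart (probability $(1-p)\alpha/i$), or continues to iteration $i{+}1$ (probability $(1-p)(1-\alpha/i)$). Summing over the accepting iteration,
\[
\Pr(\text{Stage 2 accepts } x^\ast) \;=\; p\sum_{i=1}^{\infty}(1-p)^{i-1}\prod_{j=1}^{i-1}\!\left(1-\tfrac{\alpha}{j}\right).
\]
Using the identity $\prod_{j=1}^{i-1}(1-\alpha/j) = \Gamma(i-\alpha)/\bigl((i-1)!\,\Gamma(1-\alpha)\bigr)$ together with the generalized binomial series $(1-q)^{-(1-\alpha)} = \sum_{k\ge 0}\frac{\Gamma(k+1-\alpha)}{k!\,\Gamma(1-\alpha)}\,q^k$ evaluated at $q = 1-p$, the sum collapses to $p\cdot p^{-(1-\alpha)} = p^{\alpha}$. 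A quick separate check shows the loop terminates almost surely, since each iteration accepts-or-restarts with probability at least $p > 0$.

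\textbf{Combining across restarts.} A single round (one Stage 1 attempt followed, if successful, by one Stage 2 attempt) therefore produces output $x$ with probability $P(x)^n \cdot P(x)^\alpha = P(x)^{1/T}$, and otherwise restarts from Stage 1 independently of the past. Since the per-round termination probability $\sum_{y} P(y)^{1/T}$ is strictly positive, the outer loop terminates almost surely, and the standard rejection-sampling argument (summing the geometric series over restart counts) yields the conditional output distribution
\[
P_T(x) \;=\; \frac{P(x)^{1/T}}{\sum_{y} P(y)^{1/T}},
\]
which is the claim. The main obstacle is the Stage 2 calculation: recognizing the iterative scheme as a Bernoulli factory for $p^{\alpha}$ and carrying out the power-series evaluation rigorously. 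The argument hinges on $\alpha \in [0,1)$, which keeps each threshold $\alpha/i$ in $[0,1]$ and ensures the binomial series converges for $q=1-p \in [0,1)$; the rest is bookkeeping about the geometric restart structure.
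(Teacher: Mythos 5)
Your proposal is correct and follows essentially the same route as the paper's proof: the same two-stage decomposition of $1/T = n+\alpha$, the probability $P(x)^n$ for Stage 1, the same series for Stage 2 collapsed to $p^{\alpha}$ via the generalized binomial theorem (you use the $\Gamma$-function form of the coefficients where the paper writes $\binom{\alpha-1}{k}$, which is the same identity), and the standard rejection-sampling normalization. Your explicit checks of almost-sure termination and of the $\alpha=0$ boundary case are small additions the paper leaves implicit, but they do not change the argument.
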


The proof is provided in Appendix \ref{proof:temp_sampling}.

\subsection{Expected Sampling Cost}

While Algorithm \ref{alg:temp_sampling_generalized} provides an exact solution for likelihood-free temperature sampling, its practical viability hinges on its computational efficiency. A central concern is the expected number of samples it requires, as each sampler call involves a forward pass through the generative head and autoencoder. Although these forward passes can be executed in parallel during inference, a prohibitively large number of samples would still create a significant computational bottleneck. The following theorem provides a closed-form expression for this expected number of sampler calls, with Corollary \ref{corollary:cost_bound} offering a more interpretable upper bound. The proof is provided in Appendix \ref{prop:sampling_cost}.

\vspace{5pt}
\begin{restatable}{theorem}{samplingcost}
\label{prop:sampling_cost}
The expected number of calls to the base sampler $S$, denoted $\mathbb E[N_{\text{total}}]$, required to generate one sample using Algorithm \ref{alg:temp_sampling_generalized} is:
$$\mathbb E[N_{\text{total}}] = \frac{n + \mathbb{I}(\alpha>0)\sum_{x} P(x)^{1/T - 1}}{Z_T}$$
where $Z_T = \sum_{x} P(x)^{1/T}$, $n = \lfloor 1/T \rfloor$, $\alpha = 1/T - n$, and $\mathbb{I}(\cdot)$ is the indicator function.
\end{restatable}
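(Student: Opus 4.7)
The plan is to exploit the restart structure of Algorithm~\ref{alg:temp_sampling_generalized}: view a single execution as a sequence of i.i.d.\ rounds, each starting at Stage~1 and ending either with acceptance of a sample or with a restart. Let $N_{\text{round}}$ denote the sampler calls made in one round, $q$ the per-round acceptance probability, and $R$ the (geometric) round index at which the algorithm terminates. The key observation is that for any fixed $i$ the event ``round $i$ is reached'' equals $\{A_1=\cdots=A_{i-1}=0\}$, which depends only on rounds $1,\ldots,i-1$ and is therefore independent of the $i$-th round's cost $X_i\stackrel{d}{=}N_{\text{round}}$. The tail-sum identity then gives $\mathbb E[N_{\text{total}}]=\sum_{i\ge 1}\mathbb E[X_i\mathbb I(i\le R)]=\mathbb E[N_{\text{round}}]\sum_{i\ge 1}(1-q)^{i-1}=\mathbb E[N_{\text{round}}]/q$, so it remains only to compute $q$ and $\mathbb E[N_{\text{round}}]$.

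For $q$, I would reuse the per-candidate analysis already established while proving Theorem~\ref{thm:temp_sampling}: Stage~1 emits a candidate $x^*=x$ with probability $P(x)^n$, and conditional on this candidate the Bernoulli-factory Stage~2 accepts with probability $P(x)^{\alpha}$ (with the $\alpha=0$ case reduced to an automatic accept). Summing gives $q=\sum_x P(x)^{n+\alpha}=Z_T$. Stage~1 always consumes exactly $n$ calls and Stage~2 is skipped when $\alpha=0$, so the remaining task is to compute the expected Stage~2 cost when $\alpha>0$, conditional on each candidate $x$.

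Conditional on $x^*=x$ with $p=P(x)$, the number of Stage~2 sampler draws $N_2$ equals the iteration index at which the loop halts, and the event $\{N_2\ge i\}$ is exactly the event that the first $i-1$ iterations neither accepted (each with probability $1-p$) nor triggered a restart (iteration $j$ continues with probability $1-\alpha/j$). Applying $\mathbb E[N_2\mid x^*=x]=\sum_{i\ge 1}\Pr[N_2\ge i]$,
\begin{equation*}
\mathbb E[N_2\mid x^*=x]=\sum_{i\ge 1}(1-p)^{i-1}\prod_{j=1}^{i-1}\frac{j-\alpha}{j}=\sum_{k\ge 0}\binom{k-\alpha}{k}(1-p)^{k},
\end{equation*}
which is the generalized binomial series evaluating to $(1-(1-p))^{-(1-\alpha)}=p^{\alpha-1}$. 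Weighting by the marginal probability $P(x)^n$ that Stage~1 delivers $x$ and summing yields $\mathbb E[\text{Stage 2 samples}]=\mathbb I(\alpha>0)\sum_x P(x)^{n+\alpha-1}=\mathbb I(\alpha>0)\sum_x P(x)^{1/T-1}$. Adding the deterministic $n$ calls from Stage~1 and dividing by $q=Z_T$ gives the claimed formula.

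The main obstacle is the Stage~2 expectation: one must recognize the series as a negative-binomial generating function and verify that the Pochhammer product $\prod_{j=1}^{i-1}(j-\alpha)/j$ lines up with $\binom{k-\alpha}{k}$ so that the sum collapses to $p^{\alpha-1}$. The Wald-type independence argument is then routine, and $q=Z_T$ is already a by-product of Theorem~\ref{thm:temp_sampling}, so these pieces serve only as bookkeeping around the core series evaluation.
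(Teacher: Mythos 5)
Your proposal is correct and follows essentially the same route as the paper's proof: a geometric/Wald decomposition giving $\mathbb E[N_{\text{total}}]=\mathbb E[N_{\text{trial}}]/Z_T$, with the Stage-2 expectation computed by the tail-sum formula and the same generalized binomial series (the paper writes it as $\sum_k (p-1)^k\binom{\alpha-1}{k}=p^{\alpha-1}$, which is term-by-term identical to your $\sum_k\binom{k-\alpha}{k}(1-p)^k$). Your explicit justification of the Wald step via independence of $\{i\le R\}$ from the $i$-th round's cost is a bit more careful than the paper's one-line appeal to the geometric distribution, but it is the same argument.
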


\vspace{3pt}
\begin{restatable}{corollary}{bound}
\label{corollary:cost_bound}
Let $|\mathcal{X}|$ be the size of sample space. The expected number of sampler calls $\mathbb E[N_{\text{total}}]$ at temperature $T \in (0, 1)$ is bounded by:
$$
\mathbb E[N_{\text{total}}] \le
\begin{cases}
\displaystyle\frac{1+n}{Z_T}, & \text{if }\  0 < T \le 0.5\\
\displaystyle\frac{1 + |\mathcal{X}|^{2 - 1/T}}{Z_T}, & \text{if }\  0.5 < T < 1
\end{cases}
$$
where $n = \lfloor 1/T \rfloor$ and $Z_T = \sum_{x} P(x)^{1/T}$.
\end{restatable}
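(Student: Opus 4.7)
\noindent\textbf{Proof proposal for Corollary~\ref{corollary:cost_bound}.}
The plan is to start from the closed-form expression
\[
\mathbb E[N_{\text{total}}] \;=\; \frac{n + \mathbb{I}(\alpha>0)\sum_{x} P(x)^{1/T - 1}}{Z_T}
\]
given by Theorem~\ref{prop:sampling_cost}, and simply bound the sum $\sum_{x} P(x)^{1/T-1}$ in each of the two temperature regimes. Since $n=\lfloor 1/T\rfloor$ is constant across both cases and $Z_T$ already appears in the denominator, the whole problem reduces to controlling this single sum, and the choice of the two cases $T\le 1/2$ vs.\ $T>1/2$ is dictated by whether the exponent $1/T-1$ is at least $1$ or strictly less than $1$, which is precisely the threshold where $x\mapsto x^{1/T-1}$ switches from convex to concave on $[0,1]$.

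In the first regime, $0<T\le 1/2$, I have $1/T-1\ge 1$, so $P(x)^{1/T-1}\le P(x)$ pointwise (since $P(x)\in[0,1]$), and therefore $\sum_{x}P(x)^{1/T-1}\le\sum_{x}P(x)=1$. Plugging this into the formula, and using that $\mathbb{I}(\alpha>0)\in\{0,1\}$, yields $\mathbb E[N_{\text{total}}]\le (n+1)/Z_T$, which is the first branch. (When $T=1/2$ we have $\alpha=0$, so the fractional-stage term vanishes and the bound still holds with room to spare.)

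In the second regime, $1/2<T<1$, I have $n=1$ and $\alpha=1/T-1\in(0,1)$, so $\mathbb{I}(\alpha>0)=1$. The relevant inequality is now the concavity of $t\mapsto t^{\alpha}$ on $[0,\infty)$: by Jensen applied to the uniform measure on $\mathcal{X}$,
\[
\frac{1}{|\mathcal{X}|}\sum_{x}P(x)^{\alpha}\;\le\;\Bigl(\frac{1}{|\mathcal{X}|}\sum_{x}P(x)\Bigr)^{\alpha}\;=\;|\mathcal{X}|^{-\alpha},
\]
so $\sum_{x}P(x)^{\alpha}\le|\mathcal{X}|^{1-\alpha}=|\mathcal{X}|^{2-1/T}$. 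Substituting and using $n=1$ gives exactly the second branch $\mathbb E[N_{\text{total}}]\le (1+|\mathcal{X}|^{2-1/T})/Z_T$.

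The main (minor) obstacle is bookkeeping at the boundary $T=1/2$: one should check that the bound is consistent whether one views this case under the $T\le 1/2$ branch (with $n=2$, $\alpha=0$) or as the limit of the $T>1/2$ branch (where the exponent $2-1/T\to 0$ and $|\mathcal{X}|^{2-1/T}\to 1$). Both recover $\mathbb E[N_{\text{total}}]\le 3/Z_T$ or better, so the statement is unambiguous. Beyond this sanity check, both cases are one-line consequences of elementary inequalities once Theorem~\ref{prop:sampling_cost} is in hand.
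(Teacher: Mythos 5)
Your proposal is correct and follows essentially the same route as the paper's proof: the pointwise bound $P(x)^{1/T-1}\le P(x)$ for the low-temperature case and Jensen's inequality for the concave map $t\mapsto t^{1/T-1}$ (equivalently, maximization at the uniform distribution) for the high-temperature case. Your explicit note that $n=1$ in the second regime and the consistency check at $T=1/2$ are minor additions beyond what the paper writes, but the argument is the same.
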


These results highlight that the algorithm's practicality is highly sensitive to the temperature $T$. A potential limitation first emerges for $T \to 1$, as the cost can scale up to the size of sample space $|\mathcal{X}| = |\mathcal{V}|^K$. It is therefore advisable to avoid using temperatures in this high-temperature regime to prevent a potential computational bottleneck. Conversely, at low temperatures, the integer part $n = \lfloor 1/T \rfloor$ becomes large. The algorithm's success requires drawing $n$ identical samples, an event with a vanishingly small probability for a large $n$ that leads to an extremely high rejection rate. A more sample-efficient approximate algorithm is therefore needed to enhance its practical utility.

\subsection{Batch Approximation}

The practical limitations of the exact algorithm become most pronounced in the low-temperature regime, where the requirement of drawing $n = \lfloor 1/T \rfloor$ identical samples leads to an extremely high rejection rate that results in poor sample utilization. To address this, we propose an efficient approximate algorithm tailored for low temperatures of the form $T=1/n$. The key insight is to shift from a single, high-risk trial to a combinatorial search within a large batch of $N$ samples ($N \gg n$). This shift allows a single batch to constitute $\binom{N}{n}$ distinct candidates, which dramatically improves sample utilization and increases the probability of finding a successful match in a single round.

For example, to sample at $T=0.5 (n=2)$, we might draw a batch of $N=10$ samples, such as $\{A, C, A, D, B, E, A, F, B, G\}$. Here, sample $A$ appears three times, and sample $B$ appears twice. The algorithm then counts the number of successful $n$-tuple candidates within this batch. For sample $A$, there are $\binom{3}{2}=3$ successful candidates. For sample $B$, there is only $\binom{2}{2}=1$ successful candidate. Finally, the output is sampled from the set of valid candidates $\{A,B\}$ according to their weighted probabilities, where $P(A) = 3/4$ and $P(B) = 1/4$. In the rare case that no sample appears at least $n$ times, the candidate set would be empty. To ensure the algorithm always produces an output, we introduce a fallback mechanism that iteratively reduce the matching requirement from $n$ to $n-1$, $n-2$, $\dots$, until a non-empty candidate set is found. The detailed process is illustrated in Algorithm \ref{alg:robust_approx_sampling}.

For any finite batch size $N$, the algorithm is biased. This bias arises because the output probability is determined by the ratio of weights calculated within a single stochastic batch, and the expectation of a ratio is generally not equal to the ratio of expectations. However, its key strength is that it is asymptotically unbiased: as the batch size $N$ approaches infinity, the output distribution converges to the true target distribution. We formalize this crucial property in the following theorem.
\vspace{5pt}
\begin{restatable}{theorem}{unbias}
\label{theo:asymptotic_unbiasedness}
Let $P_{\text{alg}}(x; N)$ be the probability of sampling $x$ using Algorithm \ref{alg:robust_approx_sampling} with a batch size of $N$, and let $P_T(x) = P(x)^{n} / Z_T$ be the true target distribution at temperature $T=1/n$, where $Z_T = \sum_{x} P(x)^n$. The algorithm is asymptotically unbiased:
$$
\lim_{N \to \infty} P_{\text{alg}}(x; N) = P_T(x).
$$
\end{restatable}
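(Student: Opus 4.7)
The plan is to reduce the statement to the strong law of large numbers applied to the multinomial counts produced by the $N$ i.i.d. draws. Fix $x$ in the support of $P$, and let $c_y$ denote the number of times outcome $y$ appears among the $N$ samples, so that $(c_y)_y \sim \mathrm{Multinomial}(N,(P(y))_y)$. Let $A_N := \{\max_y c_y \geq n\}$ be the event that the primary matching requirement succeeds without invoking the fallback. Then the algorithm's output probability decomposes as
\[
P_{\text{alg}}(x; N) \;=\; \mathbb{E}\!\left[ \mathbf{1}_{A_N}\, \frac{\binom{c_x}{n}}{\sum_y \binom{c_y}{n}} \right] \;+\; \mathbb{E}\!\left[ \mathbf{1}_{A_N^c}\, R_N(x) \right],
\]
where $R_N(x) \in [0,1]$ is the probability that the fallback branch eventually returns $x$. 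The goal is to show that the first term tends to $P_T(x)$ while the second vanishes.

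For the fallback term, note that $\mathcal{X}=\mathcal{V}^K$ is finite, and for every $y$ in the support of $P$ the strong law gives $c_y/N \to P(y) > 0$ a.s., so eventually $c_y \geq n$ for some $y$. Hence $\mathbf{1}_{A_N^c} \to 0$ a.s., and by bounded convergence $\Pr(A_N^c) \to 0$; since $R_N(x) \leq 1$, the entire fallback contribution is $o(1)$. For the primary branch, dividing numerator and denominator by $N^n$ and writing
\[
\frac{\binom{c_y}{n}}{N^n} \;=\; \frac{1}{n!}\prod_{j=0}^{n-1}\!\left(\frac{c_y}{N}-\frac{j}{N}\right),
\]
the strong law combined with the continuous mapping theorem gives $\binom{c_y}{n}/N^n \to P(y)^n/n!$ a.s. Because $|\mathcal{X}|$ is finite, summing termwise yields $\sum_y \binom{c_y}{n}/N^n \to Z_T/n! > 0$ a.s., and therefore on $A_N$ the ratio converges almost surely to $P(x)^n/Z_T = P_T(x)$.

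To upgrade a.s. convergence of the integrand to convergence of the expectation, I would invoke bounded convergence: the ratio lies in $[0,1]$ and $\mathbf{1}_{A_N} \to 1$ a.s., hence the first expectation above tends to $P_T(x)$. Combined with the vanishing fallback contribution, this gives $\lim_{N\to\infty} P_{\text{alg}}(x;N) = P_T(x)$, as required. The main obstacle is ensuring that the denominator is almost-surely positive on the relevant event—but by construction it is strictly positive precisely on $A_N$, so the ratio is unambiguous there. A secondary subtlety worth mentioning is the exchange of the sum over $y$ with the limit in $N$: for finite $\mathcal{X}$ this is automatic, but for a countably infinite support one would need a dominated-convergence argument using the bound $\sum_y (c_y/N)^n \leq \sum_y c_y/N = 1$, together with $\sum_y P(y)^n \leq 1$, to push the limit inside the sum.
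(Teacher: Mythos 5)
Your proposal is correct and follows essentially the same route as the paper's proof: normalize the combinatorial weights by $N^n$, apply the law of large numbers and the continuous mapping theorem to get convergence of the random ratio to $P_T(x)$, and then pass to the expectation via bounded convergence. The one substantive difference is that you explicitly split off the fallback branch via the event $A_N=\{\max_y c_y\ge n\}$ and show its contribution vanishes; the paper's proof silently defines $X_N$ as the weight ratio at level $n$ (which is $0/0$ on $A_N^c$) and never addresses the fallback, so your version is actually the more complete argument. Your choice of the strong law and almost-sure convergence in place of the paper's weak law and convergence in probability is immaterial, since bounded convergence applies in either mode.
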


The proof is provided in Appendix \ref{proof:asymptotic_unbiasedness}. This property of consistency establishes the algorithm as a principled approximation, where the batch size $N$ serves as a practical lever for the trade-off between efficiency and accuracy. Because the algorithm relies solely on a black-box sampling interface, its utility extends naturally beyond the CALM framework to the entire class of implicit language models. This positions it as a universal toolkit for controlled decoding in discrete spaces.

\begin{algorithm}[t]
\caption{Approximate Temperature Sampling}
\label{alg:robust_approx_sampling}
\begin{algorithmic}[1]
\Input
     A base sampler $S$; Target temperature $T=1/n$; Batch size $N \gg n$.
\Output
     A sample $x$ approximating the distribution $P_T(x) \propto P(x)^n$.

\Procedure{ApproximateTempSample}{$S, n, N$}
    \State Draw a batch of $N$ samples $\mathcal{B} = \{x_1, \dots, x_N\}$ from sampler $S$.
    \State Compute counts $c_x$ for each unique sample $x \in \mathcal{B}$.

    \For{$m \leftarrow n$ \textbf{down to} $1$} \Comment{Start with the target $n$ and fallback if needed}
        \State Initialize candidate set $\mathcal{X}_{\text{cand}} \leftarrow \emptyset$.
        \State Initialize weights list $W \leftarrow \emptyset$.

        \For{each unique sample $x$ with count $c_x \ge m$}
            \State Add $x$ to $\mathcal{X}_{\text{cand}}$.
            \State Add weight $w_x=\binom{c_x}{m}$ to $W$. \Comment{Weight is the number of combinations}
        \EndFor

        \If{$\mathcal{X}_{\text{cand}}$ is not empty}
            \State \textbf{break} \Comment{Found a valid candidate set, exit fallback loop}
        \EndIf
    \EndFor

    \State Sample $x_{\text{out}}$ from $\mathcal{X}_{\text{cand}}$ with probabilities proportional to weights in $W$.
    \State \Return $x_{\text{out}}$
\EndProcedure
\end{algorithmic}
\end{algorithm}

\section{Related Work}
\subsection{Autoencoder}

\textbf{Latent Generative Modeling.} A prominent paradigm in generative modeling involves a two-stage process: first learning a compressed latent representation of the data, and then training a generative model within that latent space. This approach often begins with a Variational Autoencoder \citep{Kingma2014}, which learns a mapping from a high-dimensional data space into a compact, continuous latent space. This principle enables modern architectures, such as latent diffusion models \citep{Rombach_2022_CVPR,pmlr-v202-liu23f}, to efficiently generate high-dimensional data from a continuous latent representation. An alternative path, the Vector Quantized VAE \citep[VQ-VAE,][]{NIPS2017_7a98af17}, learns a discrete latent space by mapping inputs to a finite, learned codebook. This approach has been foundational to the autoregressive generation of continuous data like images \citep{NEURIPS2019_5f8e2fa1,Esser_2021_CVPR,pmlr-v139-ramesh21a,sun2024autoregressivemodelbeatsdiffusion} and audio \citep{dhariwal2020jukeboxgenerativemodelmusic,zeghidour2021soundstreamendtoendneuralaudio,defossez2023high}. Our approach introduces a distinct way by performing a discrete-to-continuous mapping. Driven by the pursuit of efficiency, it significantly reduces the number of autoregressive steps required for language generation.

\textbf{Text Compression.} Compressing long text into compact vector representations is a foundational concept in sequence modeling. For instance, Recurrent Neural Networks can be viewed as implicitly compressing the entire history of a sequence into a single hidden state vector \citep{elman1990finding,6795963}. In the era of LLMs, this concept has been revitalized, with a focus on prompt compression to improve inference efficiency. For example, \citet{mu2023learning} designed a modified attention mechanism to distill prompt information into a few memory tokens. \citet{chevalier-etal-2023-adapting,ge2024incontext,10.1016/j.ipm.2024.103873} further introduced explicit reconstruction objectives to promote high fidelity compression. Recently, \citet{li-etal-2025-500xcompressor,kuratov-etal-2025-cramming,mezentsev2025exploringlatentcapacityllms} pushed the limits of compression to a ratio up to 1568x, underscoring the inherent sparsity of discrete text representations. More recently, DeepSeek-OCR \citep{wei2025deepseekocrcontextsopticalcompression} demonstrated compressing text into continuous image tokens, showing promise for applications like long-context compression. The primary focus of these methods on prompt compression places a greater emphasis on reconstruction fidelity than on the robustness of the resulting representation. Our work, by contrast, prioritizes the creation of a robust and smooth latent manifold, which is a critical prerequisite for stable downstream generative modeling.

\subsection{Likelihood-Free Language Modeling}
\textbf{Continuous Autoregressive Generation.} Autoregressive generation over continuous vectors is an emerging research frontier, with notable successes in domains such as image \citep{tschannen2023givt,li2024autoregressive,shao2025continuous,fan2025fluid,nextstepteam2025nextstep1autoregressiveimagegeneration}, video \citep{chen2024diffusion,deng2025autoregressive}, and audio synthesis \citep{turetzky2024continuousspeechsynthesisusing,sun2024multimodallatentlanguagemodeling,ma2025efficientspeechlanguagemodeling}. GIVT \citep{tschannen2023givt} pioneered this direction by fitting the distribution of the target vector with a Gaussian Mixture Model. However, the expressive power of GIVT is confined to the pre-defined family of Gaussian mixtures, a constraint that limits its ability to capture complex distributions. \citet{li2024autoregressive} overcomes this limitation by employing a lightweight diffusion head to model the vector distribution. While being more expressive, this method comes at the cost of inference efficiency due to its iterative sampling process. More recently, \citet{shao2025continuous} introduced a general framework based on strictly proper scoring rules. The Energy Transformer was presented as a concrete and powerful instance of this framework, capable of high-quality, single-step generation. Our work adopts the core Energy Transformer framework but introduce several key improvements to the generative head architecture, the energy loss, and the model's input structure, to further enhance its performance and stability for the specific challenges of language modeling.

\textbf{Parallel Token Prediction.} The goal of predicting multiple tokens in parallel to overcome the sequential bottleneck of autoregressive models is a long-standing pursuit in sequence modeling. Early efforts in this area were pioneered by non-autoregressive machine translation \citep{gu2017non,gu-kong-2021-fully,shao-etal-2021-sequence,shao2022,huang2022DATransformer,gui2023nonautoregressive}, which aims to generate an entire target sentence in a single step. While effective for highly constrained conditional tasks like translation, these methods often struggle with the inherent multi-modality of open-ended language generation. A different line of work uses multi-token prediction to enrich training signals \citep{DBLP:conf/icml/GloeckleIRLS24,shao2025beyond} or provide candidates for speculative decoding \citep{stern2018blockwise,pmlr-v202-leviathan23a}, while the underlying generation remains single-token autoregressive. A more direct approach involves hierarchical modeling, where a global model predicts large semantic chunks, which are then decoded by a local model \citep{lee2022autoregressive,yu2023megabyte,ho2024blocktransformerglobaltolocallanguage,lcmteam2024largeconceptmodelslanguage,pagnoni-etal-2025-byte,neitemeier2025hierarchical}. For instance, MegaByte \citep{yu2023megabyte} uses a global Transformer to predict blocks of tokens, but still relies on a local autoregressive model to generate tokens sequentially within each block. Conceptually closer to our work, Large Concept Models \citep{lcmteam2024largeconceptmodelslanguage} also adopt a hierarchical structure, where their global model autoregressively predicts continuous sentence embeddings. However, this approach faces several challenges that our CALM framework is designed to address: its SONAR autoencoder \citep{duquenne2023sonarsentencelevelmultimodallanguageagnostic} is computationally heavy and fragile, and its reliance on a diffusion-based generative process introduces a iterative inference bottleneck. Finally, another paradigm for parallel generation is diffusion models for text, which iteratively refine a sequence of tokens from noise, either at the full sentence \citep{NEURIPS2021_958c5305,li2022diffusionlm,10.5555/3692070.3693403} or block level \citep{han-etal-2023-ssd,arriola2025block}. These models, which currently operate in the challenging discrete token space, could potentially benefit from the robust continuous space our autoencoder provides.

\subsection{Likelihood-Free LM Evaluation}
\textbf{LM metrics.} The evaluation of language models is split into two distinct paradigms, which reflects the separation between assessing the quality of generated output and the fidelity of the learned distribution. On one hand, likelihood-based metrics, such as Perplexity, offer a principled way to evaluate the learned distribution, but they are limited to models where likelihoods are tractable. On the other hand, a diverse family of sample-based metrics focuses on the generated output. Classic methods like BLEU \citep{papineni2002bleu} and ROUGE \citep{lin-2004-rouge} assess the quality of generated text by comparing it to reference outputs. More recent approaches such as MAUVE \citep{pillutla2021mauve} or LLM-as-a-judge \citep{zheng2023judgingllmasajudgemtbenchchatbot} allows for reference-free evaluation, but they rely on heuristics or black-box models and lack the formal guarantees of scoring rules. Our proposed metric, BrierLM, is designed to bridge this gap by combining the advantages of both paradigms: it operates exclusively on model samples, yet as a strictly proper scoring rule, it offers a faithful assessment of the model's predictive quality, akin to perplexity.

\textbf{Brier Score.} 
The Brier Score was originally proposed by \citet{brier1950verification} for the evaluation of probabilistic weather forecasts. It is a classic example of a strictly proper scoring rules, theoretically guaranteeing that a model is incentivized to report its true belief to achieve the optimal score \citep{gneiting2007strictly}. Consequently, it has been widely adopted in classification tasks, primarily for evaluating the quality of probabilistic forecasts \citep{OnSubjectiveProbabilityForecasting,ferri2009experimental,hui2021evaluation} and assessing model calibration \citep{lakshminarayanan2017simple,NEURIPS2019_8558cb40,NEURIPS2022_3915a87d}. The innovation of our work is twofold: first, we introduce a method to unbiasedly estimate the Brier score in a likelihood-free manner; second, we generalize its application from a metric for simple classification tasks to one capable of assessing language modeling capabilities.

\subsection{Likelihood-Free Temperature Sampling}
\textbf{Bernoulli Factory.} The temperature sampling problem is conceptually related to the classic problem of the Bernoulli Factory \citep{10.1145/175007.175019,bernoulli_factory}, which addresses the challenge of simulating a new coin with a success probability of $f(p)$ given only a coin with an unknown success probability $p$. This mirrors our challenge of achieving a target probability proportional to $P(x)^{1/T}$ using only a base sampler for the implicit distribution $P(x)$. A key distinction is that the Bernoulli Factory problem assumes a binary outcome, whereas we operate over a large discrete sample space. Our two-stage algorithm elegantly bridges this gap. The first stage isolates a single candidate $x^*$ and reduces the problem to a binary one, and the second stage directly applies an existing Bernoulli Factory algorithm \citep{MENDO20194366} to construct an event with probability $P(x^*)^\alpha$. 

\textbf{Controlled Generation.} While many generative models lack the explicit probabilistic controls for temperature sampling, they have developed alternative strategies to navigate the trade-off between sample quality and diversity. For instance, VAEs and normalizing flows \citep{Kingma2014,pmlr-v37-rezende15} often achieve this by adjusting the variance of their prior latent distribution \citep{NEURIPS2018_d139db6a}. In Generative Adversarial Networks \citep{goodfellow2014generative}, the truncation trick restricts sampling to a high-density region of the latent space \citep{brock2018large}. Similarly, diffusion models can control stochasticity by altering the noise variance during the reverse sampling process \citep{song2021denoising}. These techniques, however, are fundamentally heuristic, as it is generally intractable to characterize the shape of the modified output distribution, and they all require white-box access to model internals like the latent space. Our work, in contrast, proposes a universal, black-box algorithm for temperature sampling from implicit models over discrete spaces, offering a provably exact method for this broad class of models.

\section{Experiments}
\subsection{Settings}
\label{sec:settings}

\textbf{Datasets.} We train our models on the Pile uncopyrighted dataset \citep{gao2020pile}. The raw text is processed with the Llama 3 tokenizer \citep{grattafiori2024llama}, resulting in a training set of $\sim$230B tokens. We evaluate model performance on the WikiText-103 benchmark \citep{merity2017pointer}.

\textbf{Model.} Our models are built upon a standard Transformer backbone. We adopt most of the architecture designs from the LLaMA family \citep{touvron2023llama}, including RMSNorm \citep{zhang2019root}, SwiGLU activation \citep{shazeer2020glu}, and rotary positional embeddings \citep{su2021roformer}. We experiment with four scales: S (12 layers, hidden\_size=768, intermediate\_size=2048), M (16 layers, hidden\_size=1024, intermediate\_size=2752), L (16 layers, hidden\_size=1536, intermediate\_size=4096), and XL (16 layers, hidden\_size=2560, intermediate\_size=6880).

\textbf{Training Details.} The training process for our CALM framework is two-staged. We first train a suite of autoencoders on a 15B token subset of the Pile to map token chunks of size $K \in \{1, 2, 4, 8\}$ into continuous vectors. These autoencoders use a hidden size of 512, a latent dimension of 32$K$, have approximately 75M parameters, and are trained for 30k steps with a batch size of 512k tokens. Following this, the CALM models are trained on the remaining data for 250k steps with a batch size of 2 million tokens. The context length is set to 2048 steps; for CALM, this corresponds to 2048$K$ tokens. All models are optimized using the AdamW optimizer \citep{loshchilov2018decoupled} with $\beta_1=0.9,\beta_2=0.95,\epsilon=1e-8$. We use a learning rate of $3 \times 10^{-4}$ with a constant schedule and a warmup of 2000 steps, a weight decay of 0.1, and gradient clipping of 1.0.

\subsection{Main Results}
\begin{table}[t!]
\caption{Performance and computational cost comparison between Transformer baselines and CALM (K=4). CALM's reported parameter counts and FLOPs include all overhead from the autoencoder (75M parameters, training cost, and encoding/decoding FLOPs). Attention FLOPs are calculated assuming a context length of 2048.}
  \vskip 0.1in
\begin{center}
\begin{tabular}{c|ccc|c}
 \toprule
\multirow{2}{*}{Model} &  \multirow{2}{*}{\#Params} &  Train FLOPs & Infer FLOPs & \multirow{2}{*}{BrierLM} \\
& & (total, 1e20) & (per token, 1e8)& \\
\midrule
 Transformer-S  &   281M     &   6.6 &   4.4  & 6.05 \\
 Transformer-M  &   465M     &   11.9 &   7.9  & 7.07 \\
 Transformer-L  &   849M     &   22.5 &   15.0  & 8.98 \\
\midrule
 CALM-M (K=4) & 371M & 3.7 & 2.9 & 5.72 \\
 CALM-L (K=4) & 735M & 7.7 & 4.6 & 6.58 \\
 CALM-XL (K=4) & 1.82B & 19.5 & 9.4 & 8.53 \\
\bottomrule
\end{tabular}
\end{center}
\label{tab:main}
\end{table}

We present the primary results of our comparison between the standard Transformer baselines and our CALM framework (with a fixed chunk size of K=4) in Table \ref{tab:main}. The results demonstrate that CALM establishes a new, more efficient performance-compute frontier for language modeling. By increasing the semantic bandwidth of each autoregressive step, CALM is allowed to be substantially larger in parameter count while demanding fewer FLOPs for both training and inference. For instance, our 371M parameter CALM-M model achieves a BrierLM score comparable to the 281M Transformer-S baseline, yet requires 44\% fewer training FLOPs and 34\% fewer inference FLOPs. Furthermore, the results confirm that CALM benefits from scaling just as effectively as traditional Transformers, allowing performance to be consistently improved by increasing model size.

In addition to scaling model size, our framework introduces the semantic bandwidth $K$ as a new lever for navigating the performance-compute landscape. Figure \ref{fig:flops} illustrates this by plotting the performance of CALM-L with varying $K$ against the standard Transformer scaling curve. Notably, CALM-L with $K=1$ operates at a significant disadvantage, demanding more FLOPs for lower performance compared to its discrete counterpart. The gap arises because the model undertakes the more challenging continuous prediction task, which in turn highlights the significant room for future architectural and algorithmic optimizations. The advantages of CALM become apparent as we increase $K$. Moving from $K=1$ to $K=2$ nearly halves the cost with only a marginal drop in performance, and at $K=4$, the CALM model surpasses the baseline performance-compute frontier. This finding validates our central hypothesis: scaling the semantic bandwidth of each generative step provides a new and highly effective axis for optimizing the performance-compute trade-off in language models. Further increasing the chunk size to $K=8$ leads to a larger performance drop, which is likely a model capacity limitation. We hypothesize that larger models may be required to leverage the benefits of higher semantic bandwidths. 

To further investigate the learning dynamics of our framework, we plot the training curves of CALM-XL against the Transformer baselines in Figure \ref{fig:curve}. The baseline Transformer models exhibit rapid initial gains before their performance gradually begins to saturate. In contrast, CALM-XL displays a more patient but ultimately steeper learning curve, initially trailing Transformer-M but progressively closing the performance gap with the Transformer-L model. We attribute this phenomenon to the different nature of the predictive task. While the baseline models learn the relatively simple task of predicting a single, low-information discrete token, our CALM model must learn to model the complex, high-dimensional distribution of continuous vectors, which explains the slower initial progress. However, once this ability is established, the model can unlock the potential of its large parameter count, entering a phase of more significant and sustained improvements.
\begin{figure}[t]
\centering
\begin{minipage}{.47\textwidth}
  \centering
  \includegraphics[width=1.\linewidth]{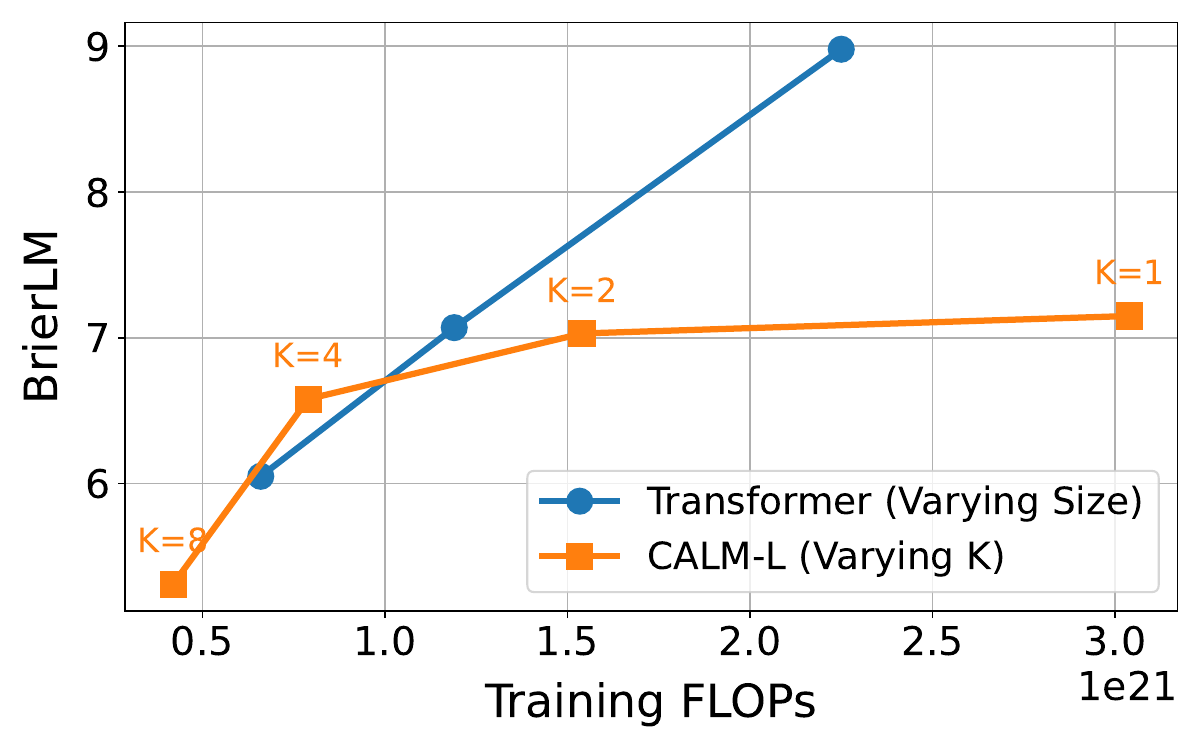}
    \vspace{-2em}
    \caption{The effect of chunk size K on the performance-compute trade-off.}
      \label{fig:flops}
\end{minipage}
\hfill
\begin{minipage}{.47\textwidth}
  \centering
  \includegraphics[width=1.\linewidth]{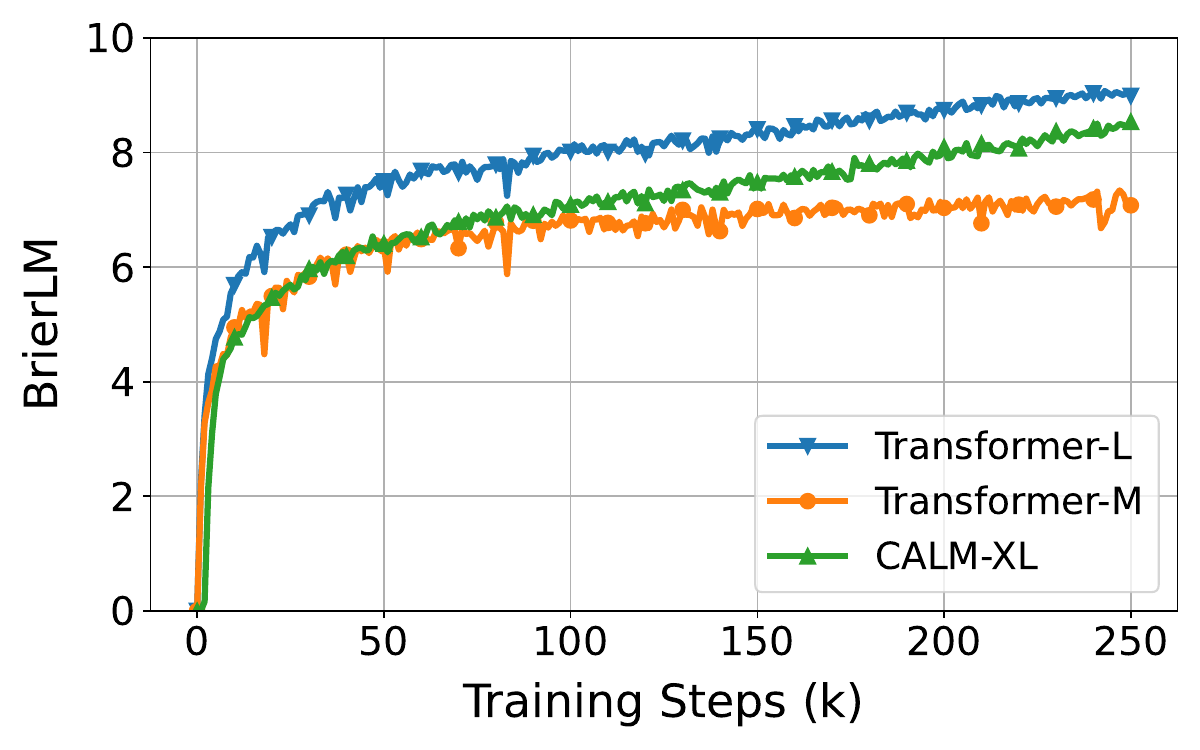}
  \vspace{-2em}
    \caption{Training progress of CALM and traditional Transformer models.}
    \label{fig:curve}
\end{minipage}
\end{figure}

\subsection{Effect of Autoencoder}
In this section, we study the effect of the autoencoder's design choices on the final performance of the CALM framework. The autoencoder is a critical component, as it defines the latent space in which the continuous language model operates. To isolate its effects, we hold the downstream language model fixed across all experiments in this section: an Energy Transformer with a hidden size of 768, 12 hidden layers, 16 attention heads, an FFN intermediate size of 2048, and a generative head with 3 MLP blocks. Each model configuration is trained for 50,000 steps. Unless otherwise specified, the autoencoder uses the default parameters as described in Section \ref{sec:settings}. We begin with a comprehensive ablation study to validate the contribution of each proposed technique, followed by a detailed analysis of the effect of several key hyperparameters.

We first validate the design choices for the autoencoder, with results detailed in Table \ref{tab:vae}. While a standard, reconstruction-only autoencoder provides a reasonable baseline, naively incorporating a variational objective leads to a significant drop in performance. This degradation is traced to a severe instance of posterior collapse, where we found that 71 of the 128 latent dimensions had collapsed to the standard normal prior. The introduction of the KL clipping strategy proves to be the crucial remedy, which effectively prevents dimensional collapse and leads to a notable performance improvement. Furthermore, applying dropout regularization to both the input tokens and the latent vector yields considerable, orthogonal performance benefits, confirming that each technique contributes uniquely to shaping a high-fidelity and robust latent space.

\begin{table}[h]
    \vspace{-0.5em}
    \caption{Ablation study of the autoencoder's regularization techniques. Performance is measured by BrierLM on the downstream language modeling task.}
    \label{tab:vae}
    \vskip 0.1in
    \begin{center}
    \begin{tabular}{cc|cc|c}
        \toprule
        $\mathcal{L}_{\text{KL}}$ & $\mathcal{L}_{\text{KL}}^{clip}$& DropToken & DropLatent & BrierLM\\
        \midrule
        & & & & 3.99 \\
        \midrule
        \checkmark & & & & 3.48\\
         &\checkmark & & & 4.13\\
        \midrule
         &\checkmark &\checkmark & & 4.55\\
         &\checkmark & &\checkmark& 4.46\\
         &\checkmark &\checkmark &\checkmark& 4.70\\
        \bottomrule
    \end{tabular}
    \end{center}
\end{table}

\begin{figure}[t]
\centering
\begin{minipage}{.47\textwidth}
  \centering
  \includegraphics[width=1.\linewidth]{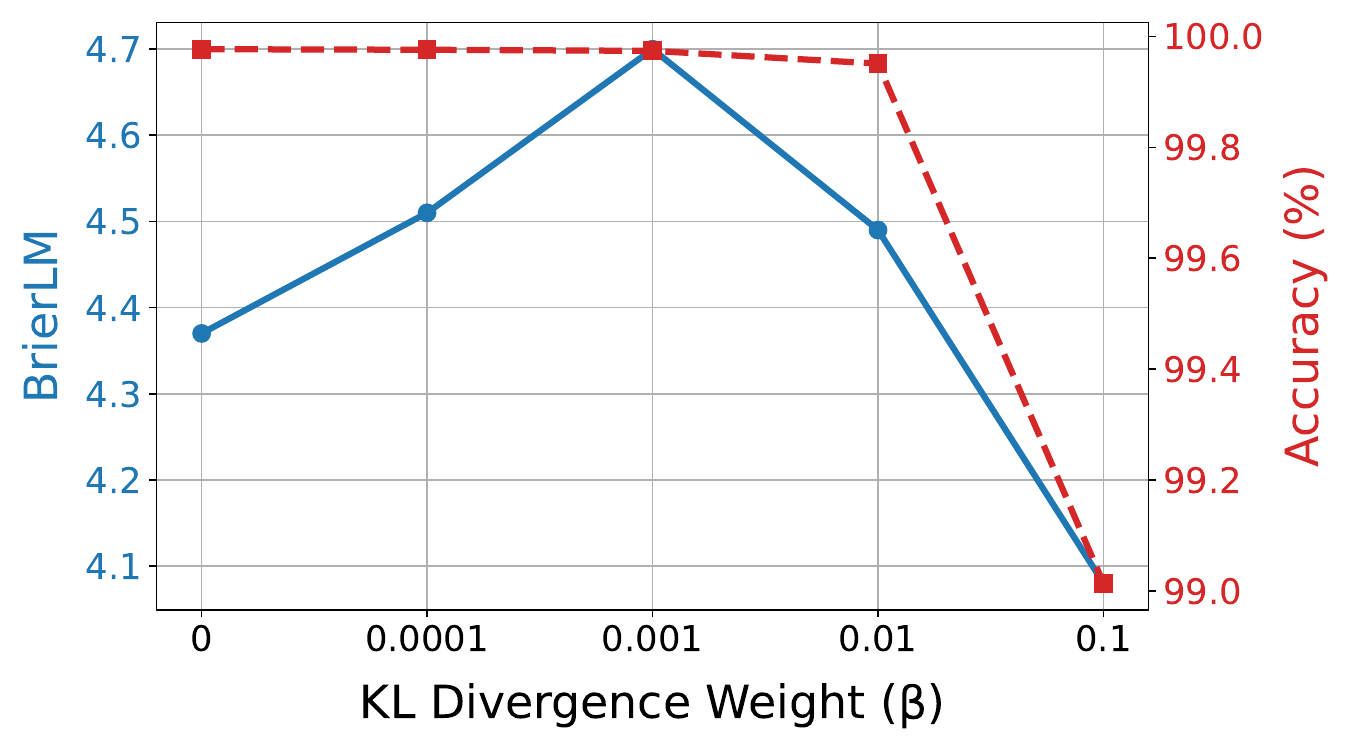}
  \vspace{-2em}
    \caption{Effect of the KL divergence weight on the autoencoder's reconstruction accuracy and the downstream BrierLM score.}
    \label{fig:kl}
\end{minipage}
\hfill
\begin{minipage}{.47\textwidth}
  \centering
  \includegraphics[width=1.\linewidth]{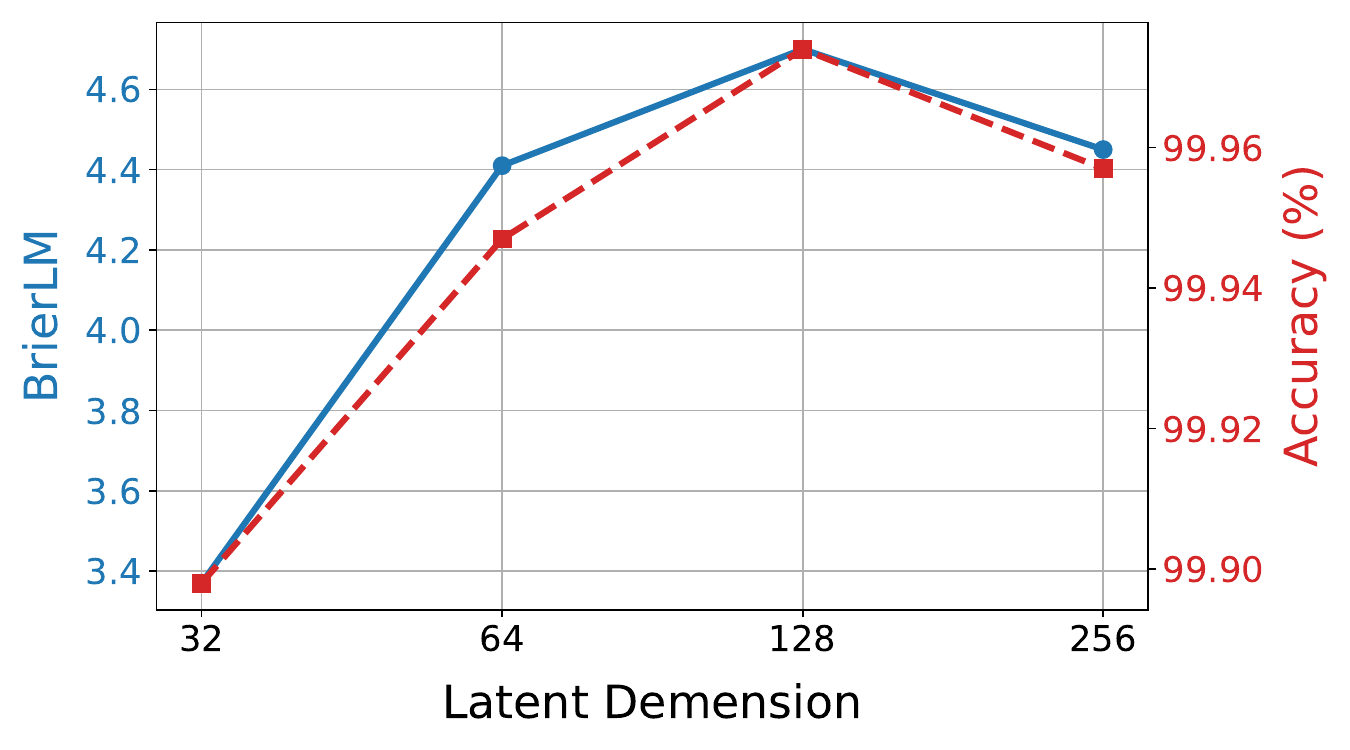}
    \vspace{-2em}
    \caption{Effect of the latent demension on the autoencoder's reconstruction accuracy and the downstream BrierLM score.}
      \label{fig:dim}
\end{minipage}
\end{figure}

\textbf{KL weight.} We next examine the model's sensitivity to the KL divergence weight, $\beta$, which governs the trade-off between reconstruction fidelity and latent space regularization. We varied $\beta$ across several orders of magnitude and present the results in Figure \ref{fig:kl}. Starting from a baseline with no KL regularization ($\beta=0$), we observe that introducing a small amount of variational regularization significantly improves the final BrierLM score, which confirms our hypothesis: a moderate regularization effectively smooths the latent manifold, making it more learnable for the Energy Transformer, while leaving reconstruction accuracy almost unaffected. However, this trend reverses as the regularization becomes overly aggressive. At $\beta=0.1$, the BrierLM score drops sharply, a decline directly linked to the autoencoder's compromised reconstruction fidelity, which falls to $\sim$99\%. Based on these findings, we selected $\beta=0.001$ to train our autoencoder.

\textbf{Latent Demension.} We next examine the influence of the latent dimension, $l$, which functions as the information bottleneck of the autoencoder. We evaluated latent dimensions of 32, 64, 128, and 256, and respectively scaled the dropout rate to 0.05, 0.1, 0.15, and 0.2. The results are illustrated in Figure \ref{fig:dim}. As seen, the reconstruction accuracy remains consistently high across all configurations, but the downstream performance varies, peaking at $l=128$. This suggests a trade-off in selecting the optimal dimension. A latent space that is too small, such as with $l=32$, forces the autoencoder to learn an overly compact and brittle representation. Conversely, a large latent dimension may lead the autoencoder to encode noisy or irrelevant features from the input tokens. This forces the Energy Transformer to expend its finite capacity modeling this noise, making it more challenging to discern the underlying data manifold. A dimension of $l=128$ thus appears to strike an optimal balance, providing sufficient capacity for a robust representation while maintaining a structured and learnable latent space for the downstream generative model.

\textbf{Scale.} Finally, we examine the impact of scaling the autoencoder. We explored several axes of scaling: doubling the number of layers in both the encoder and decoder to 4, doubling the hidden dimension to 1024, and expanding the training dataset to 100B tokens. Interestingly, none of these scaling efforts resulted in a significant improvement in the final BrierLM score. This finding suggests that the autoencoder's task is inherently simple and does not benefit from the aggressive scaling. A lightweight architecture, trained on a relatively modest amount of data, is sufficient to learn the high-fidelity and robust representation required for our framework. This is a desirable property, as it allows the autoencoder to be a computationally negligible component of the overall system.

\subsection{Effect of Model Architecture}

In this section, we conduct ablation studies on the CALM model architecture to investigate the impact of different design choices on model performance. Unless otherwise specified, all experiments are conducted using a base configuration with a hidden size of 768, 12 hidden layers, 16 attention heads, and an FFN intermediate size of 2048. The generative head consists of 3 MLP blocks, and all models are trained for 50,000 steps.

\textbf{Diffusion and Flow Matching.} Since the generative head can be any continuous generative model, we also evaluate two prominent choices as alternatives: diffusion \citep{NEURIPS2020_4c5bcfec} and flow matching \citep{lipman2023flow}. For these experiments, we adopt an architecture consistent with the diffusion head used in \citet{li2024autoregressive}. To ensure a fair comparison, we replicate the input hidden state $N=8$ times for both models, mirroring the multi-sample approach used for our energy loss and promoting stable learning. During inference, we use 100 iterative steps by default. For the Flow Matching model, we use a midpoint sampler. Figure \ref{fig:diffusion} compares the performance of the diffusion, flow matching, and energy-based generative heads. The results show that both flow matching and our energy-based head outperform the diffusion model, exhibiting a noticeable performance gap. Between the two, flow matching exhibits faster initial convergence, whereas our energy-based head reaches a higher performance ceiling.

Figure \ref{fig:iteration} further compares the models' performance across different numbers of inference iterations. For the flow matching model, we tested both the Euler and midpoint samplers. As shown, the diffusion model requires a large number of iterations to generate valid results. In contrast, the flow matching model is significantly more efficient; the midpoint sampler, in particular, achieves decent quality in just 2 steps and reaches its near-optimal performance within 4 steps. Our energy-based generative head achieves the best of both worlds: it delivers superior performance while completely eliminating the need for iterative decoding, making it a compelling choice for the CALM framework.

\begin{figure}[t]
\centering
\begin{minipage}{.47\textwidth}
  \centering
  \includegraphics[width=1.\linewidth]{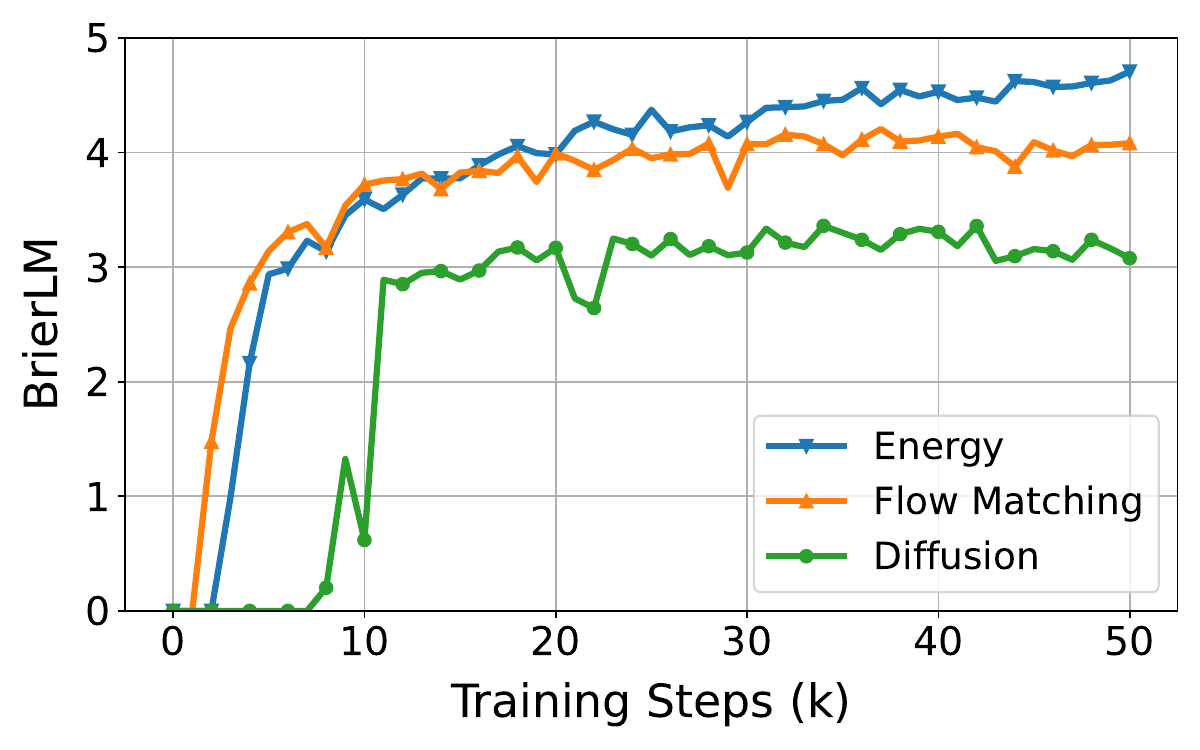}
  \vspace{-2em}
    \caption{BrierLM scores during training for different generative heads.}
    \label{fig:diffusion}
\end{minipage}%
\hfill
\begin{minipage}{.47\textwidth}
  \centering
  \includegraphics[width=1.\linewidth]{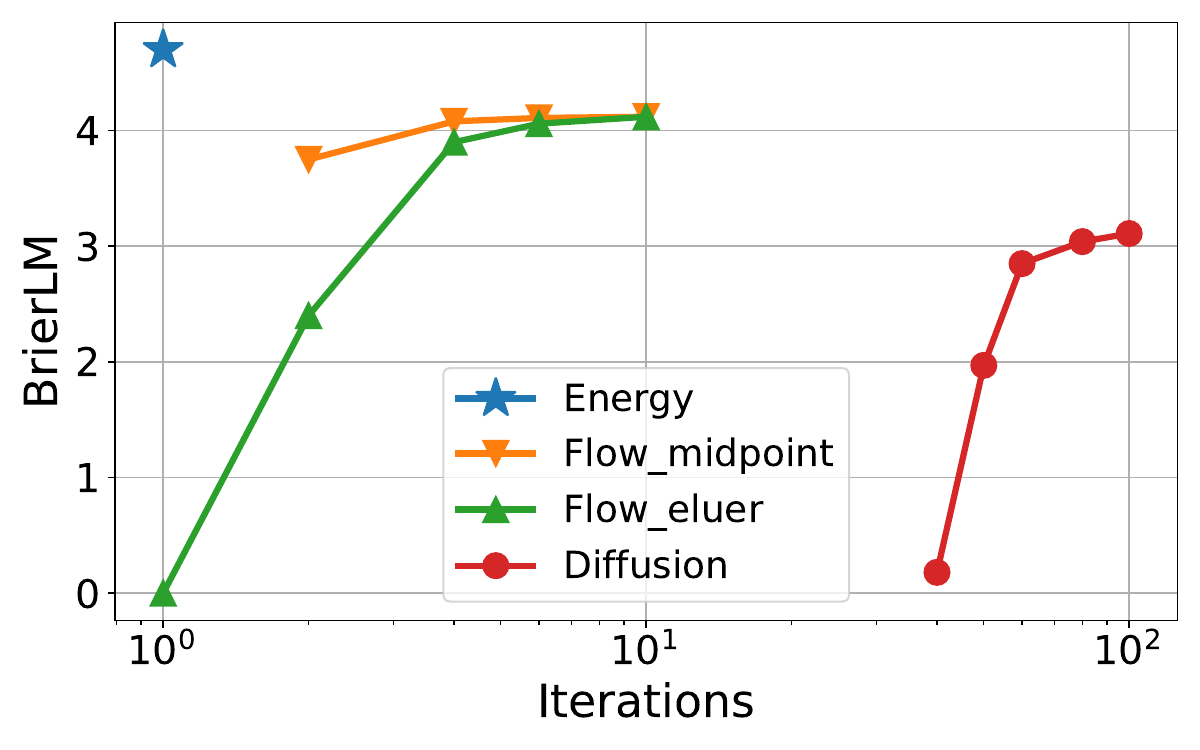}
    \vspace{-2em}
    \caption{Effect of sampling steps on the generation quality of diffusion and flow matching.}
      \label{fig:iteration}
\end{minipage}
\end{figure}

\vspace{2pt}
\textbf{Energy Loss.} We now analyze the impact of the energy loss formulation on model performance. Our energy loss (Equation \ref{eq:energy_loss}) involves two sampling hyperparameters: the number of model-generated samples, $N$, and the number of target samples, $M$. Larger values for $N$ and $M$ provide a better estimation of the true energy score, but also increase the computational cost. Our default configuration is $N=8$ and $M=100$. Table \ref{tab:hyperparam_nm} shows the results of varying $N$ and $M$, which reveal a clear trade-off between performance and computational cost. As expected, increasing the number of samples consistently improves the BrierLM score, but this comes at a nearly linear increase in training cost. Our default setting of $N=8$ and $M=100$ is thus justified as a balanced configuration, leveraging a moderately sized $N$ for a robust gradient signal and a large $M$ to stabilize training.

\begin{table}[h]
\vspace{-0.5em}
\centering
\caption{Effect of model samples $N$ and target samples $M$ on model performance and training cost.}
\label{tab:hyperparam_nm}
\vskip 0.1in
\resizebox{0.95 \columnwidth}{!}{
\begin{tabular}{lcccccccc}
\toprule
& \multicolumn{4}{c}{{Varying $N$ (fixed $M=100$)}} & \multicolumn{4}{c}{{Varying $M$ (fixed $N=8$)}} \\
\cmidrule(r){2-5} \cmidrule(l){6-9}
& $N=2$ & $N=4$ & ${N=8}$ & $N=12$ &  $M=1$ & $M=16$ & ${M=100}$ & $M=200$ \\
\midrule
BrierLM & 4.37 & 4.53 & 4.70 & 4.72 & 4.50 & 4.56 & 4.70 & 4.67 \\
Relative Cost & 0.82$\times$ & 0.91$\times$ & 1.0$\times$ & 1.13$\times$ & 0.92$\times$ & 0.94$\times$ &1.0$\times$ & 1.07$\times$ \\
\bottomrule
\end{tabular}
}
\vspace{0.5em}
\end{table}

We also investigate the effect of the exponent $\alpha$ in the energy score (Equation \ref{eq:energy_score}), which is guaranteed to be strictly proper for any $\alpha \in (0, 2)$. As shown in Table \ref{tab:alpha}, our empirical results align with this theoretical property. We observe that training fails for $\alpha < 1$ (e.g., $\alpha=0.75$), a phenomenon previously analyzed by \citet{shao2025continuous} and attributed to gradient explosion issues. For values of $\alpha$ within the range of $[1, 2)$, the model achieves decent performance, with the best empirical results obtained at our default setting of $\alpha=1$. The model's BrierLM score drops to $0$ at $\alpha=2$. This is expected, as the energy score is only proper but not strictly proper when $\alpha=2$. Consequently, the energy loss can no longer guide the model to uniquely match the true data distribution, leading to a collapse in modeling capability.

\begin{table}[h]
\vspace{-0.5em}
\caption{Effect of the exponent $\alpha$ in the energy score.}
\label{tab:alpha}
\vskip 0.1in
\begin{center}
\begin{tabular}{lcccccc}
\toprule
\bf $\alpha$ &  0.75 & 1 & 1.25 &  1.5 &  1.75 &  2 \\
\midrule
BrierLM&  Fail & 4.70 & 4.42 & 4.46 & 4.30 & 0 \\
\bottomrule
\end{tabular}
\end{center}
\end{table}

\vspace{2pt}
\textbf{Model Input.} 
 A critical design choice in our CALM framework is the input representation fed into the Transformer backbone at each autoregressive step. We evaluate three distinct input schemes: (1) Discrete input, which first decodes the previously generated vector $\mathbf{z}_{i-1}$ into $K$ discrete tokens, then passes them through an embedding layer and an input compression MLP to form the next step input; (2) Continuous input, a more direct alternative where the vector $\mathbf{z}_{i-1}$ is directly projected to the Transformer's hidden dimension via a single linear layer; (3) Combined input, which fuses the representations from the discrete and continuous methods through element-wise addition.

\begin{table}[h]
\vspace{-0.5em}
\caption{Effect of model input on language modeling performance. Performance is evaluated using Brier-n scores and the composite BrierLM. Higher scores are better.}
\label{tab:input}
\vskip 0.1in
\begin{center}
\begin{tabular}{lccccc}
\toprule
\bf Model Input & \bf Brier-1 & \bf Brier-2 & \bf Brier-3 & \bf Brier-4 & \bf BrierLM \\
\midrule
Discrete&  21.81 & 6.88 & 2.59 & 1.25 & 4.70  \\
Continuous&  17.43 & 5.04 & 1.74 & 0.73 & 3.25  \\
Both&  21.17 & 6.49 & 2.44 & 1.12 & 4.40 \\
\bottomrule
\end{tabular}
\end{center}
\end{table}

As summarized in Table \ref{tab:input}, the results clearly favor the discrete input strategy. The combined input offers no advantage and slightly degrades performance, while the purely continuous input leads to a substantial performance drop. This confirms our hypothesis: although the continuous vector theoretically contains all the information of its corresponding discrete tokens, its highly compact and brittle nature makes it challenging for the model to unpack the underlying semantic information. Grounding the autoregressive process in the discrete token space provides a more structured and stable input signal, which is therefore critical for achieving optimal performance.

\subsection{Temperature Sampling}
In this section, we conduct a fine-grained analysis to characterize the practical behavior of our approximate temperature sampling algorithm (Algorithm \ref{alg:robust_approx_sampling}), i.e., how the algorithm navigates the trade-off between predictive accuracy and generative diversity. To quantify them, we decompose the Brier score estimator, $\text{Brier}(P, y) \approx \mathbb{I}\{x_1=y\} + \mathbb{I}\{x_2=y\} - \mathbb{I}\{x_1=x_2\}$, into two metrics:
\begin{itemize}[leftmargin=1.1cm]
    \item \textbf{Accuracy  $\mathbb{E}[\mathbb{I}\{x=y\}]$:} This metric measures the probability that a single sample drawn from the model matches the ground truth. It directly reflects the model's accuracy.
    \item \textbf{Collision Rate $\mathbb{E}[\mathbb{I}\{x_1=x_2\}]$:} This metric measures the collision probability that two independent samples drawn from the model are identical. It serves as an inverse proxy for diversity, where a higher collision rate indicates that the output distribution is less diverse.
\end{itemize}

Consistent with our primary BrierLM metric, we report both accuracy and collision rate as the geometric mean of scores over n-grams from n=1 to 4. We first investigate how the two key hyperparameters of our algorithm—temperature $T$ and batch size $N$—influence these metrics. Specifically, we conduct two sets of experiments: for a fixed temperature $T=1/3$, we vary the batch size $N \in \{1, 10, 20, 50, 100, 200, 500, 1000\}$; for a fixed batch size $N=500$, we vary the the temperature $T \in \{1/2, 1/3, 1/4, 1/5, 1/6\}$. 

As shown in Figure \ref{fig:temp1}, both increasing the batch size $N$ and decreasing the temperature $T$ sharpen the output distribution, achieving higher accuracy at the cost of reduced diversity (i.e., a higher collision rate). A key observation, however, is the dominant role of the batch size $N$, which covers a substantially greater range of this trade-off than the temperature $T$. Intuitively, a larger batch provides a clearer statistical picture of the true distribution, making it easier to indentify high-probability candidates and confidently output them. In contrast, the effectiveness of temperature $T$ is capped by the information available within the finite batch. Thus, while temperature serves its conventional purpose, these empirical results suggest that the batch size $N$ is a more effective tool for navigating the accuracy-diversity frontier in our likelihood-free framework.

Finally, we compare the behavior of our sampling algorithm against that of a traditional Transformer. We ensure a fair comparison by selecting model checkpoints with nearly identical BrierLM scores. For CALM, we fix the temperature at $T=1/3$ while varying the batch size $N \in \{1, 10, 20, 50, 100, 200, 500, 1000\}$; for the Transformer baseline, we adjust its softmax temperature across $T \in \{1, 0.9, \dots, 0.4\}$.

The results, plotted in Figure \ref{fig:temp2}, are compelling: the accuracy-diversity trajectory traced by tuning $N$ in CALM is nearly identical to the one produced by tuning  $T$ in the traditional Transformer. This alignment shows that we can accurately replicate the generative behavior of traditional models across a wide spectrum of temperatures. For instance, matching $T=0.6$ requires a batch size of approximately $N=100$, while simulating a lower temperature of $T=0.5$ necessitates a larger batch of $N=200$. This suggests a clear and predictable trade-off: the ability to simulate lower-temperature, higher-fidelity generation comes at the cost of an increased number of samples.

\begin{figure}[t]
\centering
\begin{minipage}{.48\textwidth}
  \centering
  \includegraphics[width=1.\linewidth]{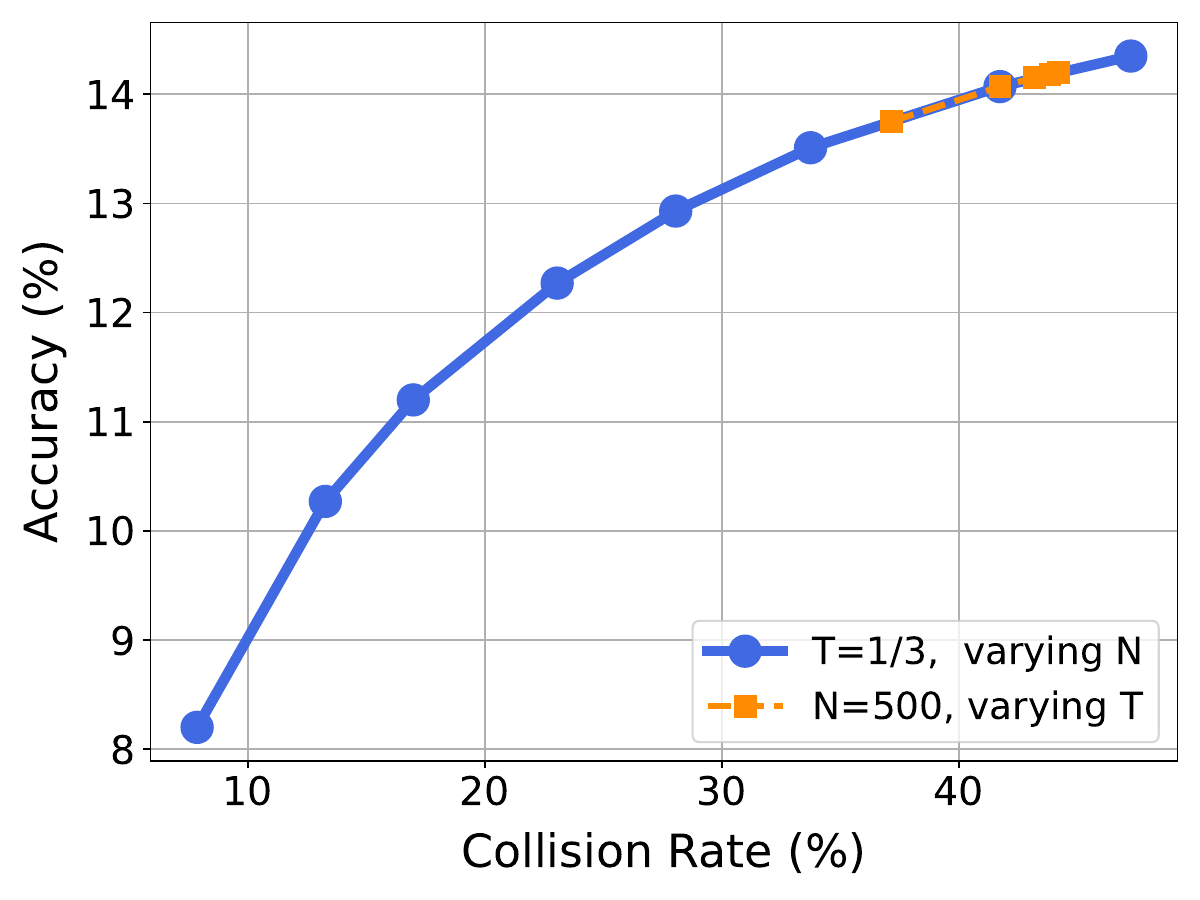}
  \vspace{-2em}
    \caption{The accuracy-diversity trade-off in CALM as a function of batch size N and temperature T.}
    \label{fig:temp1}
\end{minipage}
\hfill
\begin{minipage}{.48\textwidth}
  \centering
  \includegraphics[width=1.\linewidth]{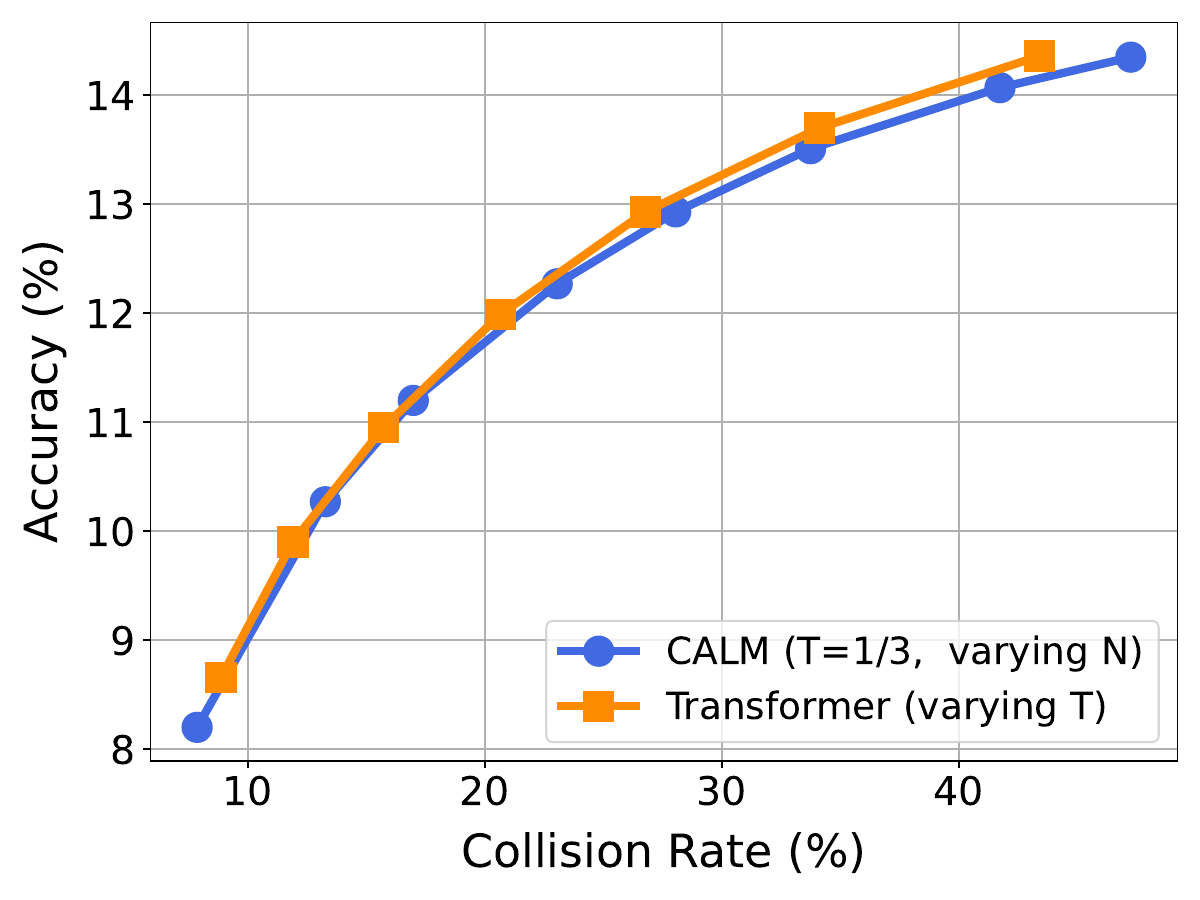}
    \vspace{-2em}
    \caption{Comparison of the temperature sampling performance between CALM and the baseline Transformer.}
    \label{fig:temp2}
\end{minipage}
\end{figure}

\section{Conclusion and Future Work}

In this work, we challenge the inefficient, token-by-token paradigm of LLMs by introducing Continuous Autoregressive Language Models (CALM), a framework that shifts generation from discrete tokens to a continuous vector space where a single vector represents K tokens. To support this approach, we develop a comprehensive likelihood-free toolkit: a robust and high-fidelity autoencoder, the energy loss for generative modeling, the BrierLM metric for LM evaluation, and a new suite of algorithms for temperature sampling. Empirical results show that CALM achieves a superior performance-compute trade-off, highlighting a new scaling axis for language modeling: scaling the semantic bandwidth of each generative step to push the performance-compute frontier of LLMs.

Despite these promising results, CALM still has significant room for future architectural and algorithmic optimizations, as indicated by the performance gap between CALM at K=1 and a standard Transformer baseline. We identify several key areas with great potential for future research:
\begin{itemize}[leftmargin=1.1cm]
    \item \textbf{Autoencoder.} The autoencoder is the cornerstone of the CALM framework, which directly governs the quality of the latent space. One key limitation of the current autoencoder is its primary focus on reconstruction, with less emphasis on semantic structure. A promising direction is to design an autoencoder that learns a semantically grounded latent space, where proximity in the latent space corresponds to semantic similarity. We note that building such semantically rich latent spaces has been a recent trend in the vision domain \citep{zheng2025diffusiontransformersrepresentationautoencoders}. This could provide a powerful inductive bias for the downstream generative model. Another direction is the development of more powerful architectures. Designs that are context-aware or autoregressive, for example, could offer superior robustness and more reliable reconstruction.
    \item \textbf{Model.} The core generative model also presents significant avenues for exploration. For instance, in terms of architecture, our current design employs a Transformer backbone followed by a lightweight generative head; while efficient, an alternative is to explore a more integrated, end-to-end generative Transformer, which may yield stronger generative modeling capabilities. In terms of the training objective, while the energy loss provides a robust foundation, investigating other strictly proper scoring rules or generative models is worthwhile, as they may offer different optimization dynamics and improved sample quality. 
    \item \textbf{Sampling.} While our work introduces a provably exact algorithm for likelihood-free temperature sampling, its reliance on rejection sampling can introduce significant inference overhead. A promising direction for future work is to explore more lightweight, heuristic methods for navigating the diversity-fidelity trade-off at inference time. This could include techniques such as manipulating the scale of the input noise to the generative head, or fine-tuning the model with a modified loss function to steer its generative behavior. 
    \item \textbf{Scaling.} A critical next step is to investigate the scaling properties of CALM. A hypothesis to validate is that larger models possess the requisite capacity to support higher semantic bandwidths. A further pursuit is to establish a new family of scaling laws. While traditional laws \citep{kaplan2020scalinglawsneurallanguage} model performance as a function of model and data size, our framework introduces the semantic bandwidth $K$ as a third variable. Formulating a unified scaling law would enable the principled selection of an optimal $K$ for any compute budget.
    \item \textbf{Algorithmic Toolkit.} The paradigm shift from discrete tokens to a continuous domain necessitates a re-evaluation of the standard LLM algorithmic toolkit. For instance, policy optimization methods in reinforcement learning typically update the model by increasing the log-probability of rewarded samples, a quantity that CALM cannot directly compute. Similarly, knowledge distillation often requires minimizing the KL divergence between teacher and student distributions, which is intractable without access to the full probability mass function. How to reformulate these techniques to operate in a sample-based regime is an important question for future research.
\end{itemize}

\bibliography{iclr2026_conference}

\begin{thebibliography}{104}
\providecommand{\natexlab}[1]{#1}
\providecommand{\url}[1]{\texttt{#1}}
\expandafter\ifx\csname urlstyle\endcsname\relax
  \providecommand{\doi}[1]{doi: #1}\else
  \providecommand{\doi}{doi: \begingroup \urlstyle{rm}\Url}\fi

\bibitem[fer(2009)]{ferri2009experimental}
An experimental comparison of performance measures for classification.
\newblock \emph{Pattern recognition letters}, 30\penalty0 (1):\penalty0 27--38,
  2009.

\bibitem[Achiam et~al.(2023)Achiam, Adler, Agarwal, Ahmad, Akkaya, Aleman,
  Almeida, Altenschmidt, Altman, Anadkat, et~al.]{achiam2023gpt}
Josh Achiam, Steven Adler, Sandhini Agarwal, Lama Ahmad, Ilge Akkaya,
  Florencia~Leoni Aleman, Diogo Almeida, Janko Altenschmidt, Sam Altman,
  Shyamal Anadkat, et~al.
\newblock Gpt-4 technical report.
\newblock \emph{arXiv preprint arXiv:2303.08774}, 2023.

\bibitem[Arriola et~al.(2025)Arriola, Sahoo, Gokaslan, Yang, Qi, Han, Chiu, and
  Kuleshov]{arriola2025block}
Marianne Arriola, Subham~Sekhar Sahoo, Aaron Gokaslan, Zhihan Yang, Zhixuan Qi,
  Jiaqi Han, Justin~T Chiu, and Volodymyr Kuleshov.
\newblock Block diffusion: Interpolating between autoregressive and diffusion
  language models.
\newblock In \emph{The Thirteenth International Conference on Learning
  Representations}, 2025.
\newblock URL \url{https://openreview.net/forum?id=tyEyYT267x}.

\bibitem[Austin et~al.(2021)Austin, Johnson, Ho, Tarlow, and van~den
  Berg]{NEURIPS2021_958c5305}
Jacob Austin, Daniel~D. Johnson, Jonathan Ho, Daniel Tarlow, and Rianne van~den
  Berg.
\newblock Structured denoising diffusion models in discrete state-spaces.
\newblock In M.~Ranzato, A.~Beygelzimer, Y.~Dauphin, P.S. Liang, and J.~Wortman
  Vaughan (eds.), \emph{Advances in Neural Information Processing Systems},
  volume~34, pp.\  17981--17993. Curran Associates, Inc., 2021.
\newblock URL
  \url{https://proceedings.neurips.cc/paper_files/paper/2021/file/958c530554f78bcd8e97125b70e6973d-Paper.pdf}.

\bibitem[Bender et~al.(2021)Bender, Gebru, McMillan-Major, and
  Shmitchell]{10.1145/3442188.3445922}
Emily~M. Bender, Timnit Gebru, Angelina McMillan-Major, and Shmargaret
  Shmitchell.
\newblock On the dangers of stochastic parrots: Can language models be too big?
\newblock In \emph{Proceedings of the 2021 ACM Conference on Fairness,
  Accountability, and Transparency}, FAccT '21, pp.\  610–623, New York, NY,
  USA, 2021. Association for Computing Machinery.
\newblock ISBN 9781450383097.
\newblock \doi{10.1145/3442188.3445922}.
\newblock URL \url{https://doi.org/10.1145/3442188.3445922}.

\bibitem[Brier(1950)]{brier1950verification}
Glenn~W Brier.
\newblock Verification of forecasts expressed in terms of probability.
\newblock \emph{Monthly weather review}, 78\penalty0 (1):\penalty0 1--3, 1950.

\bibitem[Brock et~al.(2019)Brock, Donahue, and Simonyan]{brock2018large}
Andrew Brock, Jeff Donahue, and Karen Simonyan.
\newblock Large scale {GAN} training for high fidelity natural image synthesis.
\newblock In \emph{International Conference on Learning Representations}, 2019.
\newblock URL \url{https://openreview.net/forum?id=B1xsqj09Fm}.

\bibitem[Chen et~al.(2024)Chen, Mons{\'o}, Du, Simchowitz, Tedrake, and
  Sitzmann]{chen2024diffusion}
Boyuan Chen, Diego~Mart{\'\i} Mons{\'o}, Yilun Du, Max Simchowitz, Russ
  Tedrake, and Vincent Sitzmann.
\newblock Diffusion forcing: Next-token prediction meets full-sequence
  diffusion.
\newblock In \emph{The Thirty-eighth Annual Conference on Neural Information
  Processing Systems}, 2024.
\newblock URL \url{https://openreview.net/forum?id=yDo1ynArjj}.

\bibitem[Chevalier et~al.(2023)Chevalier, Wettig, Ajith, and
  Chen]{chevalier-etal-2023-adapting}
Alexis Chevalier, Alexander Wettig, Anirudh Ajith, and Danqi Chen.
\newblock Adapting language models to compress contexts.
\newblock In Houda Bouamor, Juan Pino, and Kalika Bali (eds.),
  \emph{Proceedings of the 2023 Conference on Empirical Methods in Natural
  Language Processing}, pp.\  3829--3846, Singapore, December 2023. Association
  for Computational Linguistics.
\newblock \doi{10.18653/v1/2023.emnlp-main.232}.
\newblock URL \url{https://aclanthology.org/2023.emnlp-main.232/}.

\bibitem[DeepSeek-AI(2025)]{deepseekai2025deepseekr1incentivizingreasoningcapability}
DeepSeek-AI.
\newblock Deepseek-r1: Incentivizing reasoning capability in llms via
  reinforcement learning, 2025.
\newblock URL \url{https://arxiv.org/abs/2501.12948}.

\bibitem[D{\'e}fossez et~al.(2023)D{\'e}fossez, Copet, Synnaeve, and
  Adi]{defossez2023high}
Alexandre D{\'e}fossez, Jade Copet, Gabriel Synnaeve, and Yossi Adi.
\newblock High fidelity neural audio compression.
\newblock \emph{Transactions on Machine Learning Research}, 2023.
\newblock ISSN 2835-8856.
\newblock URL \url{https://openreview.net/forum?id=ivCd8z8zR2}.
\newblock Featured Certification, Reproducibility Certification.

\bibitem[Deng et~al.(2025)Deng, Pan, Diao, Luo, Cui, Lu, Shan, Qi, and
  Wang]{deng2025autoregressive}
Haoge Deng, Ting Pan, Haiwen Diao, Zhengxiong Luo, Yufeng Cui, Huchuan Lu,
  Shiguang Shan, Yonggang Qi, and Xinlong Wang.
\newblock Autoregressive video generation without vector quantization.
\newblock In \emph{The Thirteenth International Conference on Learning
  Representations}, 2025.
\newblock URL \url{https://openreview.net/forum?id=JE9tCwe3lp}.

\bibitem[Dhariwal et~al.(2020)Dhariwal, Jun, Payne, Kim, Radford, and
  Sutskever]{dhariwal2020jukeboxgenerativemodelmusic}
Prafulla Dhariwal, Heewoo Jun, Christine Payne, Jong~Wook Kim, Alec Radford,
  and Ilya Sutskever.
\newblock Jukebox: A generative model for music, 2020.
\newblock URL \url{https://arxiv.org/abs/2005.00341}.

\bibitem[Duquenne et~al.(2023)Duquenne, Schwenk, and
  Sagot]{duquenne2023sonarsentencelevelmultimodallanguageagnostic}
Paul-Ambroise Duquenne, Holger Schwenk, and Benoît Sagot.
\newblock Sonar: Sentence-level multimodal and language-agnostic
  representations, 2023.
\newblock URL \url{https://arxiv.org/abs/2308.11466}.

\bibitem[Elman(1990)]{elman1990finding}
Jeffrey~L Elman.
\newblock Finding structure in time.
\newblock \emph{Cognitive science}, 14\penalty0 (2):\penalty0 179--211, 1990.

\bibitem[Esser et~al.(2021)Esser, Rombach, and Ommer]{Esser_2021_CVPR}
Patrick Esser, Robin Rombach, and Bjorn Ommer.
\newblock Taming transformers for high-resolution image synthesis.
\newblock In \emph{Proceedings of the IEEE/CVF Conference on Computer Vision
  and Pattern Recognition (CVPR)}, pp.\  12873--12883, June 2021.

\bibitem[Fan et~al.(2025)Fan, Li, Qin, Li, Sun, Rubinstein, Sun, He, and
  Tian]{fan2025fluid}
Lijie Fan, Tianhong Li, Siyang Qin, Yuanzhen Li, Chen Sun, Michael Rubinstein,
  Deqing Sun, Kaiming He, and Yonglong Tian.
\newblock Fluid: Scaling autoregressive text-to-image generative models with
  continuous tokens.
\newblock In \emph{The Thirteenth International Conference on Learning
  Representations}, 2025.
\newblock URL \url{https://openreview.net/forum?id=jQP5o1VAVc}.

\bibitem[Gao et~al.(2024)Gao, Cao, and Li]{10.1016/j.ipm.2024.103873}
Jun Gao, Ziqiang Cao, and Wenjie Li.
\newblock Selfcp: Compressing over-limit prompt via the frozen large language
  model itself.
\newblock \emph{Inf. Process. Manage.}, 61\penalty0 (6), November 2024.
\newblock ISSN 0306-4573.
\newblock \doi{10.1016/j.ipm.2024.103873}.
\newblock URL \url{https://doi.org/10.1016/j.ipm.2024.103873}.

\bibitem[Gao et~al.(2020)Gao, Biderman, Black, Golding, Hoppe, Foster, Phang,
  He, Thite, Nabeshima, et~al.]{gao2020pile}
Leo Gao, Stella Biderman, Sid Black, Laurence Golding, Travis Hoppe, Charles
  Foster, Jason Phang, Horace He, Anish Thite, Noa Nabeshima, et~al.
\newblock The pile: An 800gb dataset of diverse text for language modeling.
\newblock \emph{arXiv preprint arXiv:2101.00027}, 2020.

\bibitem[Ge et~al.(2024)Ge, Jing, Wang, Wang, Chen, and Wei]{ge2024incontext}
Tao Ge, Hu~Jing, Lei Wang, Xun Wang, Si-Qing Chen, and Furu Wei.
\newblock In-context autoencoder for context compression in a large language
  model.
\newblock In \emph{The Twelfth International Conference on Learning
  Representations}, 2024.
\newblock URL \url{https://openreview.net/forum?id=uREj4ZuGJE}.

\bibitem[Gloeckle et~al.(2024)Gloeckle, Idrissi, Rozière, Lopez-Paz, and
  Synnaeve]{DBLP:conf/icml/GloeckleIRLS24}
Fabian Gloeckle, Badr~Youbi Idrissi, Baptiste Rozière, David Lopez-Paz, and
  Gabriel Synnaeve.
\newblock Better \& faster large language models via multi-token prediction.
\newblock In \emph{ICML}, 2024.
\newblock URL \url{https://openreview.net/forum?id=pEWAcejiU2}.

\bibitem[Gneiting \& Raftery(2007)Gneiting and Raftery]{gneiting2007strictly}
Tilmann Gneiting and Adrian~E Raftery.
\newblock Strictly proper scoring rules, prediction, and estimation.
\newblock \emph{Journal of the American statistical Association}, 102\penalty0
  (477):\penalty0 359--378, 2007.

\bibitem[Good(1952)]{good1952rational}
Irving~John Good.
\newblock Rational decisions.
\newblock \emph{Journal of the Royal Statistical Society: Series B
  (Methodological)}, 14\penalty0 (1):\penalty0 107--114, 1952.

\bibitem[Goodfellow et~al.(2014)Goodfellow, Pouget-Abadie, Mirza, Xu,
  Warde-Farley, Ozair, Courville, and Bengio]{goodfellow2014generative}
Ian~J Goodfellow, Jean Pouget-Abadie, Mehdi Mirza, Bing Xu, David Warde-Farley,
  Sherjil Ozair, Aaron Courville, and Yoshua Bengio.
\newblock Generative adversarial nets.
\newblock \emph{Advances in neural information processing systems}, 27, 2014.

\bibitem[Google(2025)]{geminiteam2025geminifamilyhighlycapable}
Gemini~Team Google.
\newblock Gemini: A family of highly capable multimodal models, 2025.
\newblock URL \url{https://arxiv.org/abs/2312.11805}.

\bibitem[Grattafiori et~al.(2024)Grattafiori, Dubey, Jauhri, Pandey, Kadian,
  Al-Dahle, Letman, Mathur, Schelten, Vaughan, et~al.]{grattafiori2024llama}
Aaron Grattafiori, Abhimanyu Dubey, Abhinav Jauhri, Abhinav Pandey, Abhishek
  Kadian, Ahmad Al-Dahle, Aiesha Letman, Akhil Mathur, Alan Schelten, Alex
  Vaughan, et~al.
\newblock The llama 3 herd of models.
\newblock \emph{arXiv preprint arXiv:2407.21783}, 2024.

\bibitem[Gritsenko et~al.(2020)Gritsenko, Salimans, van~den Berg, Snoek, and
  Kalchbrenner]{NEURIPS2020_9873eaad}
Alexey Gritsenko, Tim Salimans, Rianne van~den Berg, Jasper Snoek, and Nal
  Kalchbrenner.
\newblock A spectral energy distance for parallel speech synthesis.
\newblock In H.~Larochelle, M.~Ranzato, R.~Hadsell, M.F. Balcan, and H.~Lin
  (eds.), \emph{Advances in Neural Information Processing Systems}, volume~33,
  pp.\  13062--13072. Curran Associates, Inc., 2020.
\newblock URL
  \url{https://proceedings.neurips.cc/paper_files/paper/2020/file/9873eaad153c6c960616c89e54fe155a-Paper.pdf}.

\bibitem[Gruber \& Buettner(2022)Gruber and Buettner]{NEURIPS2022_3915a87d}
Sebastian Gruber and Florian Buettner.
\newblock Better uncertainty calibration via proper scores for classification
  and beyond.
\newblock In S.~Koyejo, S.~Mohamed, A.~Agarwal, D.~Belgrave, K.~Cho, and A.~Oh
  (eds.), \emph{Advances in Neural Information Processing Systems}, volume~35,
  pp.\  8618--8632. Curran Associates, Inc., 2022.
\newblock URL
  \url{https://proceedings.neurips.cc/paper_files/paper/2022/file/3915a87ddac8e8c2f23dbabbcee6eec9-Paper-Conference.pdf}.

\bibitem[Gu \& Kong(2021)Gu and Kong]{gu-kong-2021-fully}
Jiatao Gu and Xiang Kong.
\newblock Fully non-autoregressive neural machine translation: Tricks of the
  trade.
\newblock In Chengqing Zong, Fei Xia, Wenjie Li, and Roberto Navigli (eds.),
  \emph{Findings of the Association for Computational Linguistics: ACL-IJCNLP
  2021}, pp.\  120--133, Online, August 2021. Association for Computational
  Linguistics.
\newblock \doi{10.18653/v1/2021.findings-acl.11}.
\newblock URL \url{https://aclanthology.org/2021.findings-acl.11/}.

\bibitem[Gu et~al.(2018)Gu, Bradbury, Xiong, Li, and Socher]{gu2017non}
Jiatao Gu, James Bradbury, Caiming Xiong, Victor O.~K. Li, and Richard Socher.
\newblock Non-autoregressive neural machine translation.
\newblock In \emph{6th International Conference on Learning Representations,
  {ICLR} 2018, Vancouver, BC, Canada, April 30 - May 3, 2018, Conference Track
  Proceedings}, 2018.
\newblock URL \url{https://openreview.net/forum?id=B1l8BtlCb}.

\bibitem[Gui et~al.(2023)Gui, Shao, Ma, Zhang, Chen, and
  Feng]{gui2023nonautoregressive}
Shangtong Gui, Chenze Shao, Zhengrui Ma, Xishan Zhang, Yunji Chen, and Yang
  Feng.
\newblock Non-autoregressive machine translation with probabilistic
  context-free grammar.
\newblock In \emph{Thirty-seventh Conference on Neural Information Processing
  Systems}, 2023.
\newblock URL \url{https://openreview.net/forum?id=LloZFVwWvj}.

\bibitem[Han et~al.(2023)Han, Kumar, and Tsvetkov]{han-etal-2023-ssd}
Xiaochuang Han, Sachin Kumar, and Yulia Tsvetkov.
\newblock {SSD}-{LM}: Semi-autoregressive simplex-based diffusion language
  model for text generation and modular control.
\newblock In Anna Rogers, Jordan Boyd-Graber, and Naoaki Okazaki (eds.),
  \emph{Proceedings of the 61st Annual Meeting of the Association for
  Computational Linguistics (Volume 1: Long Papers)}, pp.\  11575--11596,
  Toronto, Canada, July 2023. Association for Computational Linguistics.
\newblock \doi{10.18653/v1/2023.acl-long.647}.
\newblock URL \url{https://aclanthology.org/2023.acl-long.647/}.

\bibitem[Ho et~al.(2020)Ho, Jain, and Abbeel]{NEURIPS2020_4c5bcfec}
Jonathan Ho, Ajay Jain, and Pieter Abbeel.
\newblock Denoising diffusion probabilistic models.
\newblock In H.~Larochelle, M.~Ranzato, R.~Hadsell, M.F. Balcan, and H.~Lin
  (eds.), \emph{Advances in Neural Information Processing Systems}, volume~33,
  pp.\  6840--6851. Curran Associates, Inc., 2020.
\newblock URL
  \url{https://proceedings.neurips.cc/paper_files/paper/2020/file/4c5bcfec8584af0d967f1ab10179ca4b-Paper.pdf}.

\bibitem[Ho et~al.(2024)Ho, Bae, Kim, Jo, Kim, Schuster, Fisch, Thorne, and
  Yun]{ho2024blocktransformerglobaltolocallanguage}
Namgyu Ho, Sangmin Bae, Taehyeon Kim, Hyunjik Jo, Yireun Kim, Tal Schuster,
  Adam Fisch, James Thorne, and Se-Young Yun.
\newblock Block transformer: Global-to-local language modeling for fast
  inference, 2024.
\newblock URL \url{https://arxiv.org/abs/2406.02657}.

\bibitem[Hochreiter \& Schmidhuber(1997)Hochreiter and Schmidhuber]{6795963}
Sepp Hochreiter and Jürgen Schmidhuber.
\newblock Long short-term memory.
\newblock \emph{Neural Computation}, 9\penalty0 (8):\penalty0 1735--1780, 1997.
\newblock \doi{10.1162/neco.1997.9.8.1735}.

\bibitem[Huang et~al.(2022)Huang, Zhou, Liu, Li, and
  Huang]{huang2022DATransformer}
Fei Huang, Hao Zhou, Yang Liu, Hang Li, and Minlie Huang.
\newblock Directed acyclic transformer for non-autoregressive machine
  translation.
\newblock In \emph{Proceedings of the 39th International Conference on Machine
  Learning, {ICML} 2022}, 2022.

\bibitem[Hui \& Belkin(2021)Hui and Belkin]{hui2021evaluation}
Like Hui and Mikhail Belkin.
\newblock Evaluation of neural architectures trained with square loss vs
  cross-entropy in classification tasks.
\newblock In \emph{International Conference on Learning Representations}, 2021.
\newblock URL \url{https://openreview.net/forum?id=hsFN92eQEla}.

\bibitem[Kaplan et~al.(2020)Kaplan, McCandlish, Henighan, Brown, Chess, Child,
  Gray, Radford, Wu, and Amodei]{kaplan2020scalinglawsneurallanguage}
Jared Kaplan, Sam McCandlish, Tom Henighan, Tom~B. Brown, Benjamin Chess, Rewon
  Child, Scott Gray, Alec Radford, Jeffrey Wu, and Dario Amodei.
\newblock Scaling laws for neural language models, 2020.
\newblock URL \url{https://arxiv.org/abs/2001.08361}.

\bibitem[Keane \& O'Brien(1994)Keane and O'Brien]{10.1145/175007.175019}
M.~S. Keane and George~L. O'Brien.
\newblock A bernoulli factory.
\newblock \emph{ACM Trans. Model. Comput. Simul.}, 4\penalty0 (2):\penalty0
  213–219, April 1994.
\newblock ISSN 1049-3301.
\newblock \doi{10.1145/175007.175019}.
\newblock URL \url{https://doi.org/10.1145/175007.175019}.

\bibitem[Kim et~al.(2016)Kim, Jernite, Sontag, and
  Rush]{10.5555/3016100.3016285}
Yoon Kim, Yacine Jernite, David Sontag, and Alexander~M. Rush.
\newblock Character-aware neural language models.
\newblock In \emph{Proceedings of the Thirtieth AAAI Conference on Artificial
  Intelligence}, AAAI'16, pp.\  2741–2749. AAAI Press, 2016.

\bibitem[Kingma \& Welling(2014)Kingma and Welling]{Kingma2014}
Diederik~P. Kingma and Max Welling.
\newblock {Auto-Encoding Variational Bayes}.
\newblock In \emph{2nd International Conference on Learning Representations,
  {ICLR} 2014, Banff, AB, Canada, April 14-16, 2014, Conference Track
  Proceedings}, 2014.

\bibitem[Kingma \& Dhariwal(2018)Kingma and Dhariwal]{NEURIPS2018_d139db6a}
Durk~P Kingma and Prafulla Dhariwal.
\newblock Glow: Generative flow with invertible 1x1 convolutions.
\newblock In S.~Bengio, H.~Wallach, H.~Larochelle, K.~Grauman, N.~Cesa-Bianchi,
  and R.~Garnett (eds.), \emph{Advances in Neural Information Processing
  Systems}, volume~31. Curran Associates, Inc., 2018.
\newblock URL
  \url{https://proceedings.neurips.cc/paper_files/paper/2018/file/d139db6a236200b21cc7f752979132d0-Paper.pdf}.

\bibitem[Kingma et~al.(2016)Kingma, Salimans, Jozefowicz, Chen, Sutskever, and
  Welling]{NIPS2016_ddeebdee}
Durk~P Kingma, Tim Salimans, Rafal Jozefowicz, Xi~Chen, Ilya Sutskever, and Max
  Welling.
\newblock Improved variational inference with inverse autoregressive flow.
\newblock In D.~Lee, M.~Sugiyama, U.~Luxburg, I.~Guyon, and R.~Garnett (eds.),
  \emph{Advances in Neural Information Processing Systems}, volume~29. Curran
  Associates, Inc., 2016.
\newblock URL
  \url{https://proceedings.neurips.cc/paper_files/paper/2016/file/ddeebdeefdb7e7e7a697e1c3e3d8ef54-Paper.pdf}.

\bibitem[Kuratov et~al.(2025)Kuratov, Arkhipov, Bulatov, and
  Burtsev]{kuratov-etal-2025-cramming}
Yuri Kuratov, Mikhail Arkhipov, Aydar Bulatov, and Mikhail Burtsev.
\newblock Cramming 1568 tokens into a single vector and back again: Exploring
  the limits of embedding space capacity.
\newblock In Wanxiang Che, Joyce Nabende, Ekaterina Shutova, and Mohammad~Taher
  Pilehvar (eds.), \emph{Proceedings of the 63rd Annual Meeting of the
  Association for Computational Linguistics (Volume 1: Long Papers)}, pp.\
  19323--19339, Vienna, Austria, July 2025. Association for Computational
  Linguistics.
\newblock ISBN 979-8-89176-251-0.
\newblock \doi{10.18653/v1/2025.acl-long.948}.
\newblock URL \url{https://aclanthology.org/2025.acl-long.948/}.

\bibitem[Lakshminarayanan et~al.(2017)Lakshminarayanan, Pritzel, and
  Blundell]{lakshminarayanan2017simple}
Balaji Lakshminarayanan, Alexander Pritzel, and Charles Blundell.
\newblock Simple and scalable predictive uncertainty estimation using deep
  ensembles.
\newblock \emph{Advances in neural information processing systems}, 30, 2017.

\bibitem[Lee et~al.(2022)Lee, Kim, Kim, Cho, and Han]{lee2022autoregressive}
Doyup Lee, Chiheon Kim, Saehoon Kim, Minsu Cho, and Wook-Shin Han.
\newblock Autoregressive image generation using residual quantization.
\newblock In \emph{Proceedings of the IEEE/CVF conference on computer vision
  and pattern recognition}, pp.\  11523--11532, 2022.

\bibitem[Leviathan et~al.(2023)Leviathan, Kalman, and
  Matias]{pmlr-v202-leviathan23a}
Yaniv Leviathan, Matan Kalman, and Yossi Matias.
\newblock Fast inference from transformers via speculative decoding.
\newblock In Andreas Krause, Emma Brunskill, Kyunghyun Cho, Barbara Engelhardt,
  Sivan Sabato, and Jonathan Scarlett (eds.), \emph{Proceedings of the 40th
  International Conference on Machine Learning}, volume 202 of
  \emph{Proceedings of Machine Learning Research}, pp.\  19274--19286. PMLR,
  23--29 Jul 2023.
\newblock URL \url{https://proceedings.mlr.press/v202/leviathan23a.html}.

\bibitem[Li et~al.(2024)Li, Tian, Li, Deng, and He]{li2024autoregressive}
Tianhong Li, Yonglong Tian, He~Li, Mingyang Deng, and Kaiming He.
\newblock Autoregressive image generation without vector quantization.
\newblock In \emph{The Thirty-eighth Annual Conference on Neural Information
  Processing Systems}, 2024.
\newblock URL \url{https://openreview.net/forum?id=VNBIF0gmkb}.

\bibitem[Li et~al.(2022)Li, Thickstun, Gulrajani, Liang, and
  Hashimoto]{li2022diffusionlm}
Xiang~Lisa Li, John Thickstun, Ishaan Gulrajani, Percy Liang, and Tatsunori
  Hashimoto.
\newblock Diffusion-{LM} improves controllable text generation.
\newblock In Alice~H. Oh, Alekh Agarwal, Danielle Belgrave, and Kyunghyun Cho
  (eds.), \emph{Advances in Neural Information Processing Systems}, 2022.
\newblock URL \url{https://openreview.net/forum?id=3s9IrEsjLyk}.

\bibitem[Li et~al.(2025)Li, Su, and Collier]{li-etal-2025-500xcompressor}
Zongqian Li, Yixuan Su, and Nigel Collier.
\newblock 500x{C}ompressor: Generalized prompt compression for large language
  models.
\newblock In Wanxiang Che, Joyce Nabende, Ekaterina Shutova, and Mohammad~Taher
  Pilehvar (eds.), \emph{Proceedings of the 63rd Annual Meeting of the
  Association for Computational Linguistics (Volume 1: Long Papers)}, pp.\
  25081--25091, Vienna, Austria, July 2025. Association for Computational
  Linguistics.
\newblock ISBN 979-8-89176-251-0.
\newblock \doi{10.18653/v1/2025.acl-long.1219}.
\newblock URL \url{https://aclanthology.org/2025.acl-long.1219/}.

\bibitem[Lin(2004)]{lin-2004-rouge}
Chin-Yew Lin.
\newblock {ROUGE}: A package for automatic evaluation of summaries.
\newblock In \emph{Text Summarization Branches Out}, pp.\  74--81, Barcelona,
  Spain, July 2004. Association for Computational Linguistics.
\newblock URL \url{https://aclanthology.org/W04-1013/}.

\bibitem[Lipman et~al.(2023)Lipman, Chen, Ben-Hamu, Nickel, and
  Le]{lipman2023flow}
Yaron Lipman, Ricky T.~Q. Chen, Heli Ben-Hamu, Maximilian Nickel, and Matthew
  Le.
\newblock Flow matching for generative modeling.
\newblock In \emph{The Eleventh International Conference on Learning
  Representations}, 2023.
\newblock URL \url{https://openreview.net/forum?id=PqvMRDCJT9t}.

\bibitem[Liu et~al.(2023)Liu, Chen, Yuan, Mei, Liu, Mandic, Wang, and
  Plumbley]{pmlr-v202-liu23f}
Haohe Liu, Zehua Chen, Yi~Yuan, Xinhao Mei, Xubo Liu, Danilo Mandic, Wenwu
  Wang, and Mark~D Plumbley.
\newblock {A}udio{LDM}: Text-to-audio generation with latent diffusion models.
\newblock In Andreas Krause, Emma Brunskill, Kyunghyun Cho, Barbara Engelhardt,
  Sivan Sabato, and Jonathan Scarlett (eds.), \emph{Proceedings of the 40th
  International Conference on Machine Learning}, volume 202 of
  \emph{Proceedings of Machine Learning Research}, pp.\  21450--21474. PMLR,
  23--29 Jul 2023.
\newblock URL \url{https://proceedings.mlr.press/v202/liu23f.html}.

\bibitem[Loshchilov \& Hutter(2019)Loshchilov and
  Hutter]{loshchilov2018decoupled}
Ilya Loshchilov and Frank Hutter.
\newblock Decoupled weight decay regularization.
\newblock In \emph{International Conference on Learning Representations}, 2019.
\newblock URL \url{https://openreview.net/forum?id=Bkg6RiCqY7}.

\bibitem[Lou et~al.(2024)Lou, Meng, and Ermon]{10.5555/3692070.3693403}
Aaron Lou, Chenlin Meng, and Stefano Ermon.
\newblock Discrete diffusion modeling by estimating the ratios of the data
  distribution.
\newblock In \emph{Proceedings of the 41st International Conference on Machine
  Learning}, ICML'24. JMLR.org, 2024.

\bibitem[Ma et~al.(2025)Ma, Feng, Shao, Meng, Zhou, and
  Zhang]{ma2025efficientspeechlanguagemodeling}
Zhengrui Ma, Yang Feng, Chenze Shao, Fandong Meng, Jie Zhou, and Min Zhang.
\newblock Efficient speech language modeling via energy distance in continuous
  latent space, 2025.
\newblock URL \url{https://arxiv.org/abs/2505.13181}.

\bibitem[Mendo(2019)]{MENDO20194366}
Luis Mendo.
\newblock An asymptotically optimal bernoulli factory for certain functions
  that can be expressed as power series.
\newblock \emph{Stochastic Processes and their Applications}, 129\penalty0
  (11):\penalty0 4366--4384, 2019.
\newblock ISSN 0304-4149.
\newblock \doi{https://doi.org/10.1016/j.spa.2018.11.017}.
\newblock URL
  \url{https://www.sciencedirect.com/science/article/pii/S0304414918306768}.

\bibitem[Merity et~al.(2017)Merity, Xiong, Bradbury, and
  Socher]{merity2017pointer}
Stephen Merity, Caiming Xiong, James Bradbury, and Richard Socher.
\newblock Pointer sentinel mixture models.
\newblock In \emph{International Conference on Learning Representations}, 2017.
\newblock URL \url{https://openreview.net/forum?id=Byj72udxe}.

\bibitem[Mezentsev \& Oseledets(2025)Mezentsev and
  Oseledets]{mezentsev2025exploringlatentcapacityllms}
Gleb Mezentsev and Ivan Oseledets.
\newblock Exploring the latent capacity of llms for one-step text generation,
  2025.
\newblock URL \url{https://arxiv.org/abs/2505.21189}.

\bibitem[Mikolov et~al.(2013)Mikolov, Chen, Corrado, and
  Dean]{mikolov2013efficientestimationwordrepresentations}
Tomas Mikolov, Kai Chen, Greg Corrado, and Jeffrey Dean.
\newblock Efficient estimation of word representations in vector space, 2013.
\newblock URL \url{https://arxiv.org/abs/1301.3781}.

\bibitem[Mu et~al.(2023)Mu, Li, and Goodman]{mu2023learning}
Jesse Mu, Xiang~Lisa Li, and Noah Goodman.
\newblock Learning to compress prompts with gist tokens.
\newblock In \emph{Thirty-seventh Conference on Neural Information Processing
  Systems}, 2023.
\newblock URL \url{https://openreview.net/forum?id=2DtxPCL3T5}.

\bibitem[Neitemeier et~al.(2025)Neitemeier, Deiseroth, Eichenberg, and
  Balles]{neitemeier2025hierarchical}
Pit Neitemeier, Bj{\"o}rn Deiseroth, Constantin Eichenberg, and Lukas Balles.
\newblock Hierarchical autoregressive transformers: Combining byte- and
  word-level processing for robust, adaptable language models.
\newblock In \emph{The Thirteenth International Conference on Learning
  Representations}, 2025.
\newblock URL \url{https://openreview.net/forum?id=tU074jg2vS}.

\bibitem[Occil(2020)]{bernoulli_factory}
Peter Occil.
\newblock Bernoulli factory algorithms, 2020.
\newblock URL \url{https://peteroupc.github.io/bernoulli.html}.

\bibitem[Ovadia et~al.(2019)Ovadia, Fertig, Ren, Nado, Sculley, Nowozin,
  Dillon, Lakshminarayanan, and Snoek]{NEURIPS2019_8558cb40}
Yaniv Ovadia, Emily Fertig, Jie Ren, Zachary Nado, D.~Sculley, Sebastian
  Nowozin, Joshua Dillon, Balaji Lakshminarayanan, and Jasper Snoek.
\newblock Can you trust your model\textquotesingle s uncertainty? evaluating
  predictive uncertainty under dataset shift.
\newblock In H.~Wallach, H.~Larochelle, A.~Beygelzimer, F.~d\textquotesingle
  Alch\'{e}-Buc, E.~Fox, and R.~Garnett (eds.), \emph{Advances in Neural
  Information Processing Systems}, volume~32. Curran Associates, Inc., 2019.
\newblock URL
  \url{https://proceedings.neurips.cc/paper_files/paper/2019/file/8558cb408c1d76621371888657d2eb1d-Paper.pdf}.

\bibitem[Pacchiardi et~al.(2024)Pacchiardi, Adewoyin, Dueben, and
  Dutta]{JMLR:v25:23-0038}
Lorenzo Pacchiardi, Rilwan~A. Adewoyin, Peter Dueben, and Ritabrata Dutta.
\newblock Probabilistic forecasting with generative networks via scoring rule
  minimization.
\newblock \emph{Journal of Machine Learning Research}, 25\penalty0
  (45):\penalty0 1--64, 2024.
\newblock URL \url{http://jmlr.org/papers/v25/23-0038.html}.

\bibitem[Pagnoni et~al.(2025)Pagnoni, Pasunuru, Rodriguez, Nguyen, Muller, Li,
  Zhou, Yu, Weston, Zettlemoyer, Ghosh, Lewis, Holtzman, and
  Iyer]{pagnoni-etal-2025-byte}
Artidoro Pagnoni, Ramakanth Pasunuru, Pedro Rodriguez, John Nguyen, Benjamin
  Muller, Margaret Li, Chunting Zhou, Lili Yu, Jason~E Weston, Luke
  Zettlemoyer, Gargi Ghosh, Mike Lewis, Ari Holtzman, and Srini Iyer.
\newblock Byte latent transformer: Patches scale better than tokens.
\newblock In Wanxiang Che, Joyce Nabende, Ekaterina Shutova, and Mohammad~Taher
  Pilehvar (eds.), \emph{Proceedings of the 63rd Annual Meeting of the
  Association for Computational Linguistics (Volume 1: Long Papers)}, pp.\
  9238--9258, Vienna, Austria, July 2025. Association for Computational
  Linguistics.
\newblock ISBN 979-8-89176-251-0.
\newblock \doi{10.18653/v1/2025.acl-long.453}.
\newblock URL \url{https://aclanthology.org/2025.acl-long.453/}.

\bibitem[Papineni et~al.(2002)Papineni, Roukos, Ward, and
  Zhu]{papineni2002bleu}
Kishore Papineni, Salim Roukos, Todd Ward, and Wei-Jing Zhu.
\newblock {B}leu: a method for automatic evaluation of machine translation.
\newblock In \emph{Proceedings of the 40th Annual Meeting of the Association
  for Computational Linguistics}, July 2002.
\newblock \doi{10.3115/1073083.1073135}.
\newblock URL \url{https://www.aclweb.org/anthology/P02-1040}.

\bibitem[Pillutla et~al.(2021)Pillutla, Swayamdipta, Zellers, Thickstun,
  Welleck, Choi, and Harchaoui]{pillutla2021mauve}
Krishna Pillutla, Swabha Swayamdipta, Rowan Zellers, John Thickstun, Sean
  Welleck, Yejin Choi, and Zaid Harchaoui.
\newblock {MAUVE}: Measuring the gap between neural text and human text using
  divergence frontiers.
\newblock In A.~Beygelzimer, Y.~Dauphin, P.~Liang, and J.~Wortman Vaughan
  (eds.), \emph{Advances in Neural Information Processing Systems}, 2021.
\newblock URL \url{https://openreview.net/forum?id=Tqx7nJp7PR}.

\bibitem[Ramesh et~al.(2021)Ramesh, Pavlov, Goh, Gray, Voss, Radford, Chen, and
  Sutskever]{pmlr-v139-ramesh21a}
Aditya Ramesh, Mikhail Pavlov, Gabriel Goh, Scott Gray, Chelsea Voss, Alec
  Radford, Mark Chen, and Ilya Sutskever.
\newblock Zero-shot text-to-image generation.
\newblock In Marina Meila and Tong Zhang (eds.), \emph{Proceedings of the 38th
  International Conference on Machine Learning}, volume 139 of
  \emph{Proceedings of Machine Learning Research}, pp.\  8821--8831. PMLR,
  18--24 Jul 2021.
\newblock URL \url{https://proceedings.mlr.press/v139/ramesh21a.html}.

\bibitem[Razavi et~al.(2019)Razavi, van~den Oord, and
  Vinyals]{NEURIPS2019_5f8e2fa1}
Ali Razavi, Aaron van~den Oord, and Oriol Vinyals.
\newblock Generating diverse high-fidelity images with vq-vae-2.
\newblock In H.~Wallach, H.~Larochelle, A.~Beygelzimer, F.~d\textquotesingle
  Alch\'{e}-Buc, E.~Fox, and R.~Garnett (eds.), \emph{Advances in Neural
  Information Processing Systems}, volume~32. Curran Associates, Inc., 2019.
\newblock URL
  \url{https://proceedings.neurips.cc/paper_files/paper/2019/file/5f8e2fa1718d1bbcadf1cd9c7a54fb8c-Paper.pdf}.

\bibitem[Ren et~al.(2025{\natexlab{a}})Ren, Yu, He, Shen, Yuille, and
  Chen]{ren2025flowar}
Sucheng Ren, Qihang Yu, Ju~He, Xiaohui Shen, Alan Yuille, and Liang-Chieh Chen.
\newblock Flow{AR}: Scale-wise autoregressive image generation meets flow
  matching.
\newblock In \emph{Forty-second International Conference on Machine Learning},
  2025{\natexlab{a}}.
\newblock URL \url{https://openreview.net/forum?id=JfLgvNe1tj}.

\bibitem[Ren et~al.(2025{\natexlab{b}})Ren, Yu, He, Shen, Yuille, and
  Chen]{ren2025nexttokennextxpredictionautoregressive}
Sucheng Ren, Qihang Yu, Ju~He, Xiaohui Shen, Alan Yuille, and Liang-Chieh Chen.
\newblock Beyond next-token: Next-x prediction for autoregressive visual
  generation, 2025{\natexlab{b}}.
\newblock URL \url{https://arxiv.org/abs/2502.20388}.

\bibitem[Rezende \& Mohamed(2015)Rezende and Mohamed]{pmlr-v37-rezende15}
Danilo Rezende and Shakir Mohamed.
\newblock Variational inference with normalizing flows.
\newblock In Francis Bach and David Blei (eds.), \emph{Proceedings of the 32nd
  International Conference on Machine Learning}, volume~37 of \emph{Proceedings
  of Machine Learning Research}, pp.\  1530--1538, Lille, France, 07--09 Jul
  2015. PMLR.
\newblock URL \url{https://proceedings.mlr.press/v37/rezende15.html}.

\bibitem[Rombach et~al.(2022)Rombach, Blattmann, Lorenz, Esser, and
  Ommer]{Rombach_2022_CVPR}
Robin Rombach, Andreas Blattmann, Dominik Lorenz, Patrick Esser, and Bj\"orn
  Ommer.
\newblock High-resolution image synthesis with latent diffusion models.
\newblock In \emph{Proceedings of the IEEE/CVF Conference on Computer Vision
  and Pattern Recognition (CVPR)}, pp.\  10684--10695, June 2022.

\bibitem[Sanders(1963)]{OnSubjectiveProbabilityForecasting}
Frederick Sanders.
\newblock On subjective probability forecasting.
\newblock \emph{Journal of Applied Meteorology and Climatology}, 2\penalty0
  (2):\penalty0 191 -- 201, 1963.
\newblock \doi{10.1175/1520-0450(1963)002<0191:OSPF>2.0.CO;2}.
\newblock URL
  \url{https://journals.ametsoc.org/view/journals/apme/2/2/1520-0450_1963_002_0191_ospf_2_0_co_2.xml}.

\bibitem[Sennrich et~al.(2016)Sennrich, Haddow, and
  Birch]{sennrich-etal-2016-neural}
Rico Sennrich, Barry Haddow, and Alexandra Birch.
\newblock Neural machine translation of rare words with subword units.
\newblock In Katrin Erk and Noah~A. Smith (eds.), \emph{Proceedings of the 54th
  Annual Meeting of the Association for Computational Linguistics (Volume 1:
  Long Papers)}, pp.\  1715--1725, Berlin, Germany, August 2016. Association
  for Computational Linguistics.
\newblock \doi{10.18653/v1/P16-1162}.
\newblock URL \url{https://aclanthology.org/P16-1162/}.

\bibitem[Shao \& Feng(2022)Shao and Feng]{shao2022}
Chenze Shao and Yang Feng.
\newblock Non-monotonic latent alignments for {CTC}-based non-autoregressive
  machine translation.
\newblock In Alice~H. Oh, Alekh Agarwal, Danielle Belgrave, and Kyunghyun Cho
  (eds.), \emph{Advances in Neural Information Processing Systems}, 2022.
\newblock URL \url{https://openreview.net/forum?id=Qvh0SAPrYzH}.

\bibitem[Shao et~al.(2021)Shao, Feng, Zhang, Meng, and
  Zhou]{shao-etal-2021-sequence}
Chenze Shao, Yang Feng, Jinchao Zhang, Fandong Meng, and Jie Zhou.
\newblock Sequence-level training for non-autoregressive neural machine
  translation.
\newblock \emph{Computational Linguistics}, 47\penalty0 (4):\penalty0 891--925,
  December 2021.
\newblock \doi{10.1162/coli_a_00421}.
\newblock URL \url{https://aclanthology.org/2021.cl-4.29/}.

\bibitem[Shao et~al.(2024)Shao, Meng, Liu, and Zhou]{shao2024language}
Chenze Shao, Fandong Meng, Yijin Liu, and Jie Zhou.
\newblock Language generation with strictly proper scoring rules.
\newblock In \emph{Forty-first International Conference on Machine Learning},
  2024.
\newblock URL \url{https://openreview.net/forum?id=LALSZ88Xpx}.

\bibitem[Shao et~al.(2025{\natexlab{a}})Shao, Meng, and Zhou]{shao2025beyond}
Chenze Shao, Fandong Meng, and Jie Zhou.
\newblock Beyond next token prediction: Patch-level training for large language
  models.
\newblock In \emph{The Thirteenth International Conference on Learning
  Representations}, 2025{\natexlab{a}}.
\newblock URL \url{https://openreview.net/forum?id=dDpB23VbVa}.

\bibitem[Shao et~al.(2025{\natexlab{b}})Shao, Meng, and
  Zhou]{shao2025continuous}
Chenze Shao, Fandong Meng, and Jie Zhou.
\newblock Continuous visual autoregressive generation via score maximization.
\newblock In \emph{Forty-second International Conference on Machine Learning},
  2025{\natexlab{b}}.
\newblock URL \url{https://openreview.net/forum?id=avGZE46gL6}.

\bibitem[Shazeer(2020{\natexlab{a}})]{shazeer2020glu}
Noam Shazeer.
\newblock Glu variants improve transformer.
\newblock \emph{arXiv preprint arXiv:2002.05202}, 2020{\natexlab{a}}.

\bibitem[Shazeer(2020{\natexlab{b}})]{shazeer2020gluvariantsimprovetransformer}
Noam Shazeer.
\newblock Glu variants improve transformer, 2020{\natexlab{b}}.
\newblock URL \url{https://arxiv.org/abs/2002.05202}.

\bibitem[Song et~al.(2021)Song, Meng, and Ermon]{song2021denoising}
Jiaming Song, Chenlin Meng, and Stefano Ermon.
\newblock Denoising diffusion implicit models.
\newblock In \emph{International Conference on Learning Representations}, 2021.
\newblock URL \url{https://openreview.net/forum?id=St1giarCHLP}.

\bibitem[Stern et~al.(2018)Stern, Shazeer, and Uszkoreit]{stern2018blockwise}
Mitchell Stern, Noam Shazeer, and Jakob Uszkoreit.
\newblock Blockwise parallel decoding for deep autoregressive models.
\newblock \emph{Advances in Neural Information Processing Systems}, 31, 2018.

\bibitem[Strubell et~al.(2019)Strubell, Ganesh, and
  McCallum]{strubell-etal-2019-energy}
Emma Strubell, Ananya Ganesh, and Andrew McCallum.
\newblock Energy and policy considerations for deep learning in {NLP}.
\newblock In Anna Korhonen, David Traum, and Llu{\'i}s M{\`a}rquez (eds.),
  \emph{Proceedings of the 57th Annual Meeting of the Association for
  Computational Linguistics}, pp.\  3645--3650, Florence, Italy, July 2019.
  Association for Computational Linguistics.
\newblock \doi{10.18653/v1/P19-1355}.
\newblock URL \url{https://aclanthology.org/P19-1355/}.

\bibitem[Su et~al.(2021)Su, Lu, Pan, Murtadha, Wen, and Liu]{su2021roformer}
Jianlin Su, Yu~Lu, Shengfeng Pan, Ahmed Murtadha, Bo~Wen, and Yunfeng Liu.
\newblock Roformer: Enhanced transformer with rotary position embedding.
\newblock \emph{arXiv preprint arXiv:2104.09864}, 2021.

\bibitem[Sun et~al.(2024{\natexlab{a}})Sun, Jiang, Chen, Zhang, Peng, Luo, and
  Yuan]{sun2024autoregressivemodelbeatsdiffusion}
Peize Sun, Yi~Jiang, Shoufa Chen, Shilong Zhang, Bingyue Peng, Ping Luo, and
  Zehuan Yuan.
\newblock Autoregressive model beats diffusion: Llama for scalable image
  generation, 2024{\natexlab{a}}.
\newblock URL \url{https://arxiv.org/abs/2406.06525}.

\bibitem[Sun et~al.(2024{\natexlab{b}})Sun, Bao, Wang, Peng, Dong, Huang, Wang,
  and Wei]{sun2024multimodallatentlanguagemodeling}
Yutao Sun, Hangbo Bao, Wenhui Wang, Zhiliang Peng, Li~Dong, Shaohan Huang,
  Jianyong Wang, and Furu Wei.
\newblock Multimodal latent language modeling with next-token diffusion,
  2024{\natexlab{b}}.
\newblock URL \url{https://arxiv.org/abs/2412.08635}.

\bibitem[Sutskever et~al.(2011)Sutskever, Martens, and
  Hinton]{ICML2011Sutskever_524}
Ilya Sutskever, James Martens, and Geoffrey Hinton.
\newblock Generating text with recurrent neural networks.
\newblock In Lise Getoor and Tobias Scheffer (eds.), \emph{Proceedings of the
  28th International Conference on Machine Learning (ICML-11)}, ICML '11, pp.\
  1017--1024, New York, NY, USA, June 2011. ACM.
\newblock ISBN 978-1-4503-0619-5.

\bibitem[Sz{\'e}kely(2003)]{energy}
G{\'a}bor~J Sz{\'e}kely.
\newblock E-statistics: The energy of statistical samples.
\newblock \emph{Bowling Green State University, Department of Mathematics and
  Statistics Technical Report}, 3\penalty0 (05):\penalty0 1--18, 2003.

\bibitem[team et~al.(2024)team, Barrault, Duquenne, Elbayad, Kozhevnikov,
  Alastruey, Andrews, Coria, Couairon, Costa-jussà, Dale, Elsahar, Heffernan,
  Janeiro, Tran, Ropers, Sánchez, Roman, Mourachko, Saleem, and
  Schwenk]{lcmteam2024largeconceptmodelslanguage}
LCM team, Loïc Barrault, Paul-Ambroise Duquenne, Maha Elbayad, Artyom
  Kozhevnikov, Belen Alastruey, Pierre Andrews, Mariano Coria, Guillaume
  Couairon, Marta~R. Costa-jussà, David Dale, Hady Elsahar, Kevin Heffernan,
  João~Maria Janeiro, Tuan Tran, Christophe Ropers, Eduardo Sánchez,
  Robin~San Roman, Alexandre Mourachko, Safiyyah Saleem, and Holger Schwenk.
\newblock Large concept models: Language modeling in a sentence representation
  space, 2024.
\newblock URL \url{https://arxiv.org/abs/2412.08821}.

\bibitem[Team(2025)]{nextstepteam2025nextstep1autoregressiveimagegeneration}
NextStep Team.
\newblock Nextstep-1: Toward autoregressive image generation with continuous
  tokens at scale, 2025.
\newblock URL \url{https://arxiv.org/abs/2508.10711}.

\bibitem[Touvron et~al.(2023)Touvron, Lavril, Izacard, Martinet, Lachaux,
  Lacroix, Rozi{\`e}re, Goyal, Hambro, Azhar, et~al.]{touvron2023llama}
Hugo Touvron, Thibaut Lavril, Gautier Izacard, Xavier Martinet, Marie-Anne
  Lachaux, Timoth{\'e}e Lacroix, Baptiste Rozi{\`e}re, Naman Goyal, Eric
  Hambro, Faisal Azhar, et~al.
\newblock Llama: Open and efficient foundation language models.
\newblock \emph{arXiv preprint arXiv:2302.13971}, 2023.

\bibitem[Tschannen et~al.(2023)Tschannen, Eastwood, and
  Mentzer]{tschannen2023givt}
Michael Tschannen, Cian Eastwood, and Fabian Mentzer.
\newblock Givt: Generative infinite-vocabulary transformers.
\newblock \emph{arXiv:2312.02116}, 2023.

\bibitem[Turetzky et~al.(2024)Turetzky, Shabtay, Shechtman, Aronowitz, Haws,
  Hoory, and Dekel]{turetzky2024continuousspeechsynthesisusing}
Arnon Turetzky, Nimrod Shabtay, Slava Shechtman, Hagai Aronowitz, David Haws,
  Ron Hoory, and Avihu Dekel.
\newblock Continuous speech synthesis using per-token latent diffusion, 2024.
\newblock URL \url{https://arxiv.org/abs/2410.16048}.

\bibitem[Vahidi et~al.(2024)Vahidi, Schosser, Wimmer, Li, Bischl,
  H{\"u}llermeier, and Rezaei]{vahidi2024probabilistic}
Amirhossein Vahidi, Simon Schosser, Lisa Wimmer, Yawei Li, Bernd Bischl, Eyke
  H{\"u}llermeier, and Mina Rezaei.
\newblock Probabilistic self-supervised representation learning via scoring
  rules minimization.
\newblock In \emph{The Twelfth International Conference on Learning
  Representations}, 2024.
\newblock URL \url{https://openreview.net/forum?id=skcTCdJz0f}.

\bibitem[van~den Oord et~al.(2017)van~den Oord, Vinyals, and
  kavukcuoglu]{NIPS2017_7a98af17}
Aaron van~den Oord, Oriol Vinyals, and koray kavukcuoglu.
\newblock Neural discrete representation learning.
\newblock In I.~Guyon, U.~Von Luxburg, S.~Bengio, H.~Wallach, R.~Fergus,
  S.~Vishwanathan, and R.~Garnett (eds.), \emph{Advances in Neural Information
  Processing Systems}, volume~30. Curran Associates, Inc., 2017.
\newblock URL
  \url{https://proceedings.neurips.cc/paper_files/paper/2017/file/7a98af17e63a0ac09ce2e96d03992fbc-Paper.pdf}.

\bibitem[Wei et~al.(2025)Wei, Sun, and
  Li]{wei2025deepseekocrcontextsopticalcompression}
Haoran Wei, Yaofeng Sun, and Yukun Li.
\newblock Deepseek-ocr: Contexts optical compression, 2025.
\newblock URL \url{https://arxiv.org/abs/2510.18234}.

\bibitem[YU et~al.(2023)YU, Simig, Flaherty, Aghajanyan, Zettlemoyer, and
  Lewis]{yu2023megabyte}
LILI YU, Daniel Simig, Colin Flaherty, Armen Aghajanyan, Luke Zettlemoyer, and
  Mike Lewis.
\newblock {MEGABYTE}: Predicting million-byte sequences with multiscale
  transformers.
\newblock In \emph{Thirty-seventh Conference on Neural Information Processing
  Systems}, 2023.
\newblock URL \url{https://openreview.net/forum?id=JTmO2V9Xpz}.

\bibitem[Zeghidour et~al.(2021)Zeghidour, Luebs, Omran, Skoglund, and
  Tagliasacchi]{zeghidour2021soundstreamendtoendneuralaudio}
Neil Zeghidour, Alejandro Luebs, Ahmed Omran, Jan Skoglund, and Marco
  Tagliasacchi.
\newblock Soundstream: An end-to-end neural audio codec, 2021.
\newblock URL \url{https://arxiv.org/abs/2107.03312}.

\bibitem[Zhang \& Sennrich(2019)Zhang and Sennrich]{zhang2019root}
Biao Zhang and Rico Sennrich.
\newblock Root mean square layer normalization.
\newblock \emph{Advances in Neural Information Processing Systems}, 32, 2019.

\bibitem[Zheng et~al.(2025)Zheng, Ma, Tong, and
  Xie]{zheng2025diffusiontransformersrepresentationautoencoders}
Boyang Zheng, Nanye Ma, Shengbang Tong, and Saining Xie.
\newblock Diffusion transformers with representation autoencoders, 2025.
\newblock URL \url{https://arxiv.org/abs/2510.11690}.

\bibitem[Zheng et~al.(2023)Zheng, Chiang, Sheng, Zhuang, Wu, Zhuang, Lin, Li,
  Li, Xing, Zhang, Gonzalez, and
  Stoica]{zheng2023judgingllmasajudgemtbenchchatbot}
Lianmin Zheng, Wei-Lin Chiang, Ying Sheng, Siyuan Zhuang, Zhanghao Wu, Yonghao
  Zhuang, Zi~Lin, Zhuohan Li, Dacheng Li, Eric~P. Xing, Hao Zhang, Joseph~E.
  Gonzalez, and Ion Stoica.
\newblock Judging llm-as-a-judge with mt-bench and chatbot arena, 2023.
\newblock URL \url{https://arxiv.org/abs/2306.05685}.

\end{thebibliography}
\bibliographystyle{iclr2026_conference}
\newpage
\appendix

\section{Proof}

\subsection{Proof of Theorem \ref{thm:temp_sampling}}
\label{proof:temp_sampling}
\tempsampling*
\begin{proof}
Algorithm \ref{alg:temp_sampling_generalized} implements a rejection sampling scheme where  sample $x$ is accepted with probability $P_{\text{accept}}(x)$. The proof proceeds by showing that the acceptance probability $P_{\text{accept}}(x)=P(x)^{1/T}$, so the rejection sampling procedure yields the desired normalized sample distribution:
\begin{equation}
P_T(x) = \frac{P_{\text{accept}}(x)}{\sum_{x}P_{\text{accept}}(x)} = \frac{P(x)^{1/T}}{\sum_{x}P(x)^{1/T}}.
\end{equation}

The inverse temperature is decomposed as $1/T = n + \alpha$, where $n = \lfloor 1/T \rfloor$ is the integer part and $\alpha \in [0, 1)$ is the fractional part. The acceptance probability is the product of the success probabilities of the two corresponding stages.

\vspace{5pt}
\textbf{Stage 1 (Integer Part):} For the algorithm to proceed to stage 2 with a candidate sample $x$, it must first draw $x$ for $n$ consecutive times in stage 1. As each draw is independent with probability $P(x)$, the probability of passing stage 1 with candidate $x$ is $P(x)^n$.

\vspace{5pt}
\textbf{Stage 2 (Fractional Part):} Let $p = P(x)$. The probability of acceptance in stage 2 is the cumulative probability of being accepted at any given iteration $i \ge 1$:
\begin{equation}
\begin{aligned}
\label{eq:pstage2}
P_{\text{stage2}} &= P(\text{accept at } i=1) + P(\text{pass } i=1, \text{accept at } i=2) + \dots \\
&= p + (1-p)\left(1-\frac{\alpha}{1}\right)p + (1-p)^2\left(1-\frac{\alpha}{1}\right)\left(1-\frac{\alpha}{2}\right)p + \dots \\
&= p \sum_{k=0}^{\infty} (1-p)^k \prod_{j=1}^{k} \left(1 - \frac{\alpha}{j}\right)\\
&= p \sum_{k=0}^{\infty} (p-1)^k \binom{\alpha-1}{k}\\
&= p \cdot (p-1+1)^{\alpha-1}\\
&= p^{\alpha}.
\end{aligned}
\end{equation}
The second to last step is due to the generalized binomial theorem.

\vspace{5pt}
\textbf{Total Probability:} The total probability of accepting a sample $x$ in a single trial is the product of the probabilities from the two stages:
\begin{equation}
P_{\text{accept}}(x)=P(x)^n \cdot P(x)^\alpha = P(x)^{1/T},
\end{equation}
which completes the proof.
\end{proof}
\newpage
\subsection{Proof of Theorem \ref{prop:sampling_cost}}
\samplingcost*
\begin{proof}
The algorithm conducts series of independent trials, each with identical probability of success, and continues until one trial is successful, thereby following a geometric distribution. Therefore, the expected number of samples is the ratio of the expected number of samples per trial, $\mathbb E[N_{trial}]$, to the probability of a trial's success, $P(\text{success})$:
\begin{equation}
\mathbb E[N_{total}] = \frac{\mathbb E[N_{trial}]}{P(\text{success})}.
\end{equation}
\vspace{5pt}
\noindent\textbf{Denominator (Success Probability):} A trial is successful if any sample $x$ is accepted. The total success probability is the sum of acceptance probabilities over all possible outcomes:
\begin{equation}
P(\text{success}) = \sum_{x} P_{\text{accept}}(x) = \sum_{x} P(x)^{1/T} = Z_T.
\end{equation}

\vspace{5pt}
\noindent\textbf{Numerator (Expected Calls per Trial):}
Let $N_{trial}$ be the number of sampler calls in a single trial. A trial always involves $N_1$ calls for stage 1 and may involve $N_2$ calls for stage 2. The number of calls in Stage 1 is fixed at $N_1 = n$. Thus, $\mathbb E[N_1] = n$.

If $\alpha = 0$, stage 2 is never performed, so $\mathbb E[N_2] = 0$. If $\alpha > 0$, stage 2 is only executed if stage 1 succeeds with some candidate $x$. Let $\mathcal{E}_{x}$ be this event, so $P(\mathcal{E}_{x}) = P(x)^n$, and we have:
\begin{equation}
\mathbb E[N_2] = \sum_{x} P(\mathcal{E}_x) \cdot \mathbb E[N_2 | \mathcal{E}_x].
\end{equation}
The conditional expectation $\mathbb E[N_2 | \mathcal{E}_{x}]$ is the expected number of draws in stage 2 given candidate $x$. Using the formula $\mathbb E[X] = \sum_{k=1}^{\infty} P(X \ge k)$, where $X$ is the number of draws in stage 2:
\begin{equation}
\begin{aligned}
\mathbb E[N_2 | \mathcal{E}_{x}] &= \sum_{k=1}^{\infty} P(\text{stage 2 requires at least } k \text{ draws}) \\
&= \sum_{k=0}^{\infty} (1-P(x))^k \prod_{j=1}^{k} \left(1 - \frac{\alpha}{j}\right).
\end{aligned}
\end{equation}
This is the same sum we evaluated in Equation \ref{eq:pstage2}, which equals $P(x)^{\alpha-1}$. Therefore, if $\alpha > 0$:
\begin{equation}
\mathbb E[N_2] = \sum_{x} P(x)^n \cdot P(x)^{\alpha-1} = \sum_{x} P(x)^{n+\alpha-1} = \sum_{x} P(x)^{1/T-1}.
\end{equation}
Combining the two cases, the total expected number of calls per trial is:
\begin{equation}
\mathbb E[N_{\text{trial}}] = \mathbb E[N_1] + \mathbb E[N_2] = n + \mathbb{I}(\alpha>0)\sum_{x} P(x)^{1/T - 1}.
\end{equation}
Combining the numerator and denominator gives the final result:
\begin{equation}
\mathbb E[N_{\text{total}}] = \frac{n + \mathbb{I}(\alpha>0)\sum_{x} P(x)^{1/T - 1}}{Z_T}.
\end{equation}
\end{proof}

\bound*
\begin{proof}
The proof is divided into two cases based on the temperature range. We start from the general formula for the expected cost from Theorem \ref{prop:sampling_cost}:
\begin{equation}
\mathbb E[N_{\text{total}}] = \frac{n + \mathbb{I}(\alpha>0)\sum_{x} P(x)^{1/T - 1}}{Z_T}.
\end{equation}

\textbf{Case 1: Low-Temperature Regime ($0 < T \le 0.5$)}

In this range, the exponent $1/T - 1 \ge 1$. Since $P(x) \in [0, 1]$, for any exponent $\beta \ge 1$, we have $P(x)^\beta \le P(x)$. Thus, by summing over the entire sample space $\mathcal{X}$:
\begin{equation}
\sum_{x \in \mathcal{X}} P(x)^{1/T - 1} \le \sum_{x \in \mathcal{X}} P(x) = 1.
\end{equation}
The numerator of the cost formula is therefore bounded by $n+1$:
\begin{equation}
n + \mathbb{I}(\alpha>0)\sum_{x \in \mathcal{X}} P(x)^{1/T - 1} \le n + \mathbb{I}(\alpha>0) \cdot 1 \le n+1.
\end{equation}
This establishes the bound for the low-temperature regime.

\textbf{Case 2: High-Temperature Regime ($0.5 < T < 1$)}

In this range,the exponent $\beta = 1/T - 1$ is in the interval $(0, 1)$. For such an exponent, the function $f(p) = p^\beta$ is concave. By Jensen's inequality, the sum $\sum_{x \in \mathcal{X}} P(x)^\beta$ is maximized when $P(x)$ is a uniform distribution over the sample space, i.e., $P(x)=1/K$ for all $x \in \mathcal{X}$. The bound is:
\begin{equation}
\sum_{x \in \mathcal{X}} P(x)^{1/T - 1} \le \sum_{x \in \mathcal{X}} \left(\frac{1}{K}\right)^{1/T - 1} = K \cdot \left(\frac{1}{K}\right)^{1/T - 1} = K^{2 - 1/T}.
\end{equation}
Substituting this into the cost formula gives the bound for the high-temperature regime. This completes the proof.
\end{proof}
\newpage
\subsection{Proof of Theorem \ref{theo:asymptotic_unbiasedness}}
\label{proof:asymptotic_unbiasedness}
\unbias*
\begin{proof}

Let $\mathcal{B} = \{x_1, \dots, x_N\}$ be a batch of $N$ samples drawn i.i.d. from the base distribution $P(x)$. For any sample $x \in \mathcal{X}$, let $C_x$ be the random variable for the count of $x$ in $\mathcal{B}$. The weight assigned to $x$ is $W_x = \binom{C_x}{n}$. Let $X_N$ be the random variable for the probability of sampling $x$ from $\mathcal{B}$:
\begin{equation}
X_N(x) = \frac{W_x}{\sum_{z \in \mathcal{X}} W_z} = \frac{\binom{C_x}{n}}{\sum_{z \in \mathcal{X}} \binom{C_z}{n}}.
\end{equation}

The overall probability we seek to analyze is the expectation of this random variable: $P_{\text{alg}}(x; N) = \mathbb{E}[X_N]$. Our goal is to show that $\lim_{N \to \infty} \mathbb{E}[X_N] = P_T(x)$. The proof proceeds in two main steps: first, we show that the random variable $X_N$ converges in probability to $P_T(x)$; second, we use the Bounded Convergence Theorem to show that this implies the convergence of its expectation.

\textbf{1. Convergence in Probability.} By the Weak Law of Large Numbers, the proportion of occurrences of any sample $x$ converges in probability to its true probability $P(x)$:
\begin{equation}
\frac{C_x}{N} \xrightarrow{p} P(x) \quad \text{as } N \to \infty.
\end{equation}
The weight $W_x$ can be written as a polynomial in $C_x$, $W_x = \frac{1}{n!} C_x(C_x-1)\dots(C_x-n+1)$. We normalize this by $N^n$:
\begin{equation}
\frac{W_x}{N^n} = \frac{1}{n!} \left(\frac{C_x}{N}\right) \left(\frac{C_x-1}{N}\right) \dots \left(\frac{C_x-n+1}{N}\right).
\end{equation}
Since $\frac{C_x}{N} \xrightarrow{p} P(x)$ and $\frac{k}{N} \to 0$ for any constant $k$, each term in the product converges to $P(x)$. By the Continuous Mapping Theorem, the entire expression converges in probability:
\begin{equation}
\frac{W_x}{N^n} \xrightarrow{p} \frac{1}{n!} P(x)^n.
\end{equation}
Now, we analyze the random variable $X_N$ by dividing its numerator and denominator by $N^n$:
\begin{equation}
X_N = \frac{W_x / N^n}{\sum_{z \in \mathcal{X}} (W_z / N^n)}.
\end{equation}
Applying the Continuous Mapping Theorem again for the ratio, we show that the random variable $X_N$ converges in probability to $P_T(x)$:
\begin{equation}
X_N \xrightarrow{p} \frac{\frac{1}{n!} P(x)^n}{\sum_{z \in \mathcal{X}} \frac{1}{n!} P(z)^n} = \frac{P(x)^n}{\sum_{z \in \mathcal{X}} P(z)^n} = P_T(x).
\end{equation}

\textbf{2. Convergence of Expectation.} We have established that the random variable $X_N$ converges in probability to $P_T(x)$. Besides, we have that $X_N$ is inherently bounded:
\begin{equation}
0 \le X_N = \frac{W_x}{\sum_{z \in \mathcal{X}} W_z} \le 1.
\end{equation}

We can now invoke the Bounded Convergence Theorem, which states that if a sequence of random variables $X_N$ converges in probability to $X$, and $|X_N| \le M$ for all $N$ for some constant $M$, then $\lim_{N \to \infty} \mathbb{E}[X_N] = \mathbb{E}[\lim_{N \to \infty} X_N]$. Applying this theorem to our case:
\begin{equation}
\lim_{N \to \infty} P_{\text{alg}}(x; N) = \lim_{N \to \infty} \mathbb{E}[X_N] = \mathbb{E}\left[\lim_{N \to \infty} X_N\right]=P_T(x).
\end{equation}
This completes the proof that the algorithm is asymptotically unbiased.

\end{proof}
\end{document}